\documentclass[11pt,a4paper]{article}

\usepackage[margin=1in]{geometry}

\usepackage{amsmath}
\usepackage{amssymb}
\usepackage{amsfonts}
\usepackage{amsthm}
\usepackage{mathtools}

\usepackage{bm}
\usepackage{physics}

\usepackage{graphicx}
\usepackage{subcaption}
\usepackage{booktabs}
\usepackage{array}
\usepackage{placeins}
\usepackage{appendix}
\usepackage{hyperref}
\usepackage[nameinlink,capitalize,noabbrev]{cleveref}

\usepackage{microtype}
\usepackage[backend=biber, style=numeric]{biblatex}
\usepackage{cleveref}

\theoremstyle{definition}
\newtheorem{definition}{Definition}[section]

\theoremstyle{plain}
\newtheorem{theorem}{Theorem}[section]
\newtheorem{lemma}{Lemma}
\newtheorem{proposition}{Proposition}[section]

\theoremstyle{remark}
\newtheorem{remark}{Remark}[section]

\newcommand{\R}{\mathbb{R}}

\newcommand{\D}{\mathbb{D}}

\newcommand{\fnorm}[1]{\left\lVert #1 \right\rVert}
\newcommand{\fabs}[1]{\left\lvert #1 \right\rvert}

\title{\textbf{Notes on Fourier-Bessel Wavelets}\\
\large Theory, Construction, and Fourier-Domain Representation}

\author{
  Marcel Venturotti\thanks{Department of Computer Science, University of Bath. Email: \texttt{mv514@bath.ac.uk}} 
  \and 
  Georgios Exarchakis\thanks{Department of Computer Science, University of Bath. Email: \texttt{ge394@bath.ac.uk}}
}

\begin{document}

\maketitle

\begin{abstract}
These notes develop the mathematical foundations and construction of a Fourier-Bessel wavelet family inspired by the disk harmonics of Shaqfa et al.\cite{Shaqfa_2025}. We begin with the relevant properties of Bessel and modified Bessel functions and introduce the wavelet properties required for the construction. We then derive the Fourier-Bessel disk harmonics as solutions to the Helmholtz equation on the unit disk subject to a Neumann boundary condition.

Building on this basis, we construct a wavelet family by applying a Gaussian spatial envelope and introducing a zero-mean correction for the zeroth angular order. We derive the corresponding normalisation constants for $L^2$-based applications and discuss $L^1$-based normalisation for frequency-domain peak consistency. Finally, we derive a closed-form Fourier-domain representation of the resulting wavelets.

The main motivation is the approximately linear spacing, which converges to $\pi$ between consecutive radial eigenvalues. Rather than replacing the conventional dyadic organisation of wavelet families, this construction lays out the foundation to explore whether a more uniform radial frequency allocation can be useful for applications in which broad and balanced frequency coverage is desirable. 

\end{abstract}

\newpage
\tableofcontents
\newpage
\section*{Notation}

\begin{table}[h]
    \centering
    \begin{tabular}{>{$}c<{$} p{0.68\textwidth}}
        \toprule
        \textbf{Symbol} & \textbf{Meaning} \\
        \midrule
        \rho,\varphi
        & Radial and angular polar coordinates in the spatial domain. \\

        x,y
        & Cartesian spatial coordinates, with
          $x=\rho\cos\varphi$ and $y=\rho\sin\varphi$. \\

        q,\phi
        & Radial and angular polar coordinates in the frequency domain. \\

        m
        & Angular order of the Fourier-Bessel function. \\

        k
        & Root index, corresponding to the $k$-th positive root of $J_m'$. \\

        \lambda_{m,k}
        & The $k$-th positive Neumann eigenvalue for angular order $m$,
          satisfying $J_m'(\lambda_{m,k})=0$. \\

        J_m
        & Bessel functions of the first kind.\\

        I_m
        & Modified Bessel functions of the first kind. \\

        N_{m,k}
        & Normalisation constant for the Fourier-Bessel disk basis. \\

        K_{m,k}
        & Zero-mean correction term, non-zero only for $m=0$. \\

        N^{(2)}_{m,k}
        & $L^2$ normalisation constant for the wavelet. \\

        N^{(1)}_{m,k}
        & Peak normalisation constant, obtained from the radial Fourier
          response. \\

        \psi_{m,k}
        & Fourier-Bessel wavelet in the spatial domain. \\

        \widehat{\psi}_{m,k}
        & Fourier-Bessel wavelet in the frequency domain. \\

        q_{m,k}^\ast
        & Frequency at which the radial Fourier response attains its
          maximum. \\
        \bottomrule
    \end{tabular}
    \caption{Notation used throughout the notes.}
\end{table}

\newpage

\section{Introduction}

Solid harmonic wavelets \cite{Eickenberg_2018}
provided an early example of constructing wavelets from solutions to differential equations. By using harmonic functions, which are solutions to Laplace's equation, the authors obtained wavelets whose Fourier representations form approximately ring-shaped structures rather than the more traditional Gaussian-shaped responses. This provides controlled coverage of frequency space with tunable overlap. Nevertheless, the resulting filter banks still rely on dyadic scaling and rotation, which produces a frequency organisation that places progressively greater separation between higher frequency bands.

Wavelets provide a natural framework for analysing signals at multiple scales, decomposing a signal into localised oscillatory components indexed jointly by position and scale \cite{mallat_book,daubechies}. The multiresolution analysis introduced by Mallat \cite{multires} formalised this idea by organising signal information into a hierarchy of nested approximation spaces linked by dyadic dilations, giving wavelet decompositions their characteristic logarithmic frequency tiling. This dyadic organisation, in which each scale doubles the previous one, underlies much of classical wavelet theory, including results on regularity, sparsity, and stability. More recently, wavelet filter banks have been used as the basis for scattering networks \cite{bruna2012invariantscatteringconvolutionnetworks, sifre2014rigidmotionscatteringtextureclassification}, which cascade wavelet transforms with pointwise nonlinearities and averaging operators to build signal representations that are stable to deformations while retaining high-frequency information that is otherwise lost under simple averaging. These constructions typically inherit the dyadic scale structure of classical wavelets, motivating interest in alternative filter constructions that depart from this scaling while retaining wavelet-like localisation properties.

In this work, we explore a different construction inspired by the disk harmonics introduced by Shaqfa et al.~\cite{Shaqfa_2025}. Rather than starting from solutions to Laplace's equation, we use the Bessel-function solutions of the Helmholtz equation on the unit disk. These functions provide radial oscillations whose corresponding eigenvalues become approximately linearly spaced, with asymptotic spacing $\pi$. We use these eigenfunctions as the oscillatory component of a new wavelet family.

The resulting Fourier-Bessel wavelets share the ring-like frequency structure of solid harmonic wavelets, while replacing the traditional dyadic scale parameter with the eigenvalue associated with the radial Bessel function. This provides a natural mechanism for controlling the radial frequency location of the filters.

Importantly, the use of approximately linearly spaced radial frequencies is not proposed as a replacement for dyadic scaling. Dyadic scaling is fundamental to much of wavelet theory and underlies important theoretical properties such as Lipschitz continuity and stability under diffeomorphisms. These properties have not been established for the present construction. Rather, the motivation here is to propose an alternative allowing to explore whether a different frequency organisation can be useful in settings where approximately uniform representation of radial frequencies is desirable. In particular, this may be relevant to reconstruction oriented tasks, where uniform frequency coverage can be preferable to deliberately allocating greater representation to lower frequencies.

\paragraph{Scope of these notes.}
The purpose of this manuscript is to provide a detailed and self-contained derivation of the Fourier-Bessel wavelet construction introduced here. Rather than presenting a complete empirical evaluation, we focus on the mathematical motivation, construction, normalisation, and Fourier-domain
representation of the proposed wavelets. We also provide a small number of numerical examples illustrating their frequency-domain behaviour and frame like coverage. 

\newpage
\paragraph{Contributions.}
The main novel mathematical components developed in these notes are:
\begin{enumerate}

    \item the construction of a Gaussian-windowed Fourier-Bessel wavelet
    family together with the zero-mean correction required for the
    $m=0$ mode.

    \item closed-form expressions for the $L^1$ and $L^2$ normalisation constants
    and the corresponding frequency-domain peak normalisation.

    \item a closed form Fourier domain representation of the resulting
    wavelets.

    \item preliminary numerical evidence that the linear eigenvalue spacing can provide more uniform frequency coverage than Solid Harmonics.
\end{enumerate}

To accompany the mathematical development, we have implemented the construction in the Python library (\texttt{fbscatnet}\footnote{Code available at: \url{https://github.com/Smee18/FourierBesselWavelets}}). The library is intended to make the derivations and figures in these notes reproducible and provides functionality for using the wavelets within a scattering
network, in a similar manner to Kymatio~\cite{andreux2022kymatioscatteringtransformspython}.

\FloatBarrier
\section{Mathematical Background}

\subsection{Bessel Functions}

Bessel functions arise naturally when solving the radial component of the Helmholtz equation in polar coordinates. The Bessel equation of order $m$ is

\begin{equation}
\label{eq:bessel}
x^2\frac{d^2y}{dx^2} +x\frac{dy}{dx} +(x^2-m^2)y=0.
\end{equation}

Because this is a second-order ordinary differential equation, it has two linearly independent solutions. These are conventionally called the Bessel functions of the first and second kind.

\FloatBarrier
\subsubsection{Bessel Functions of the First and Second Kind}
\label{sec:bessel-first-second}

The Bessel functions of the first and second kind, $J_m$ and $Y_m$, respectively, form the standard pair of solutions. We use $J_m$ because it is regular at the origin for non-negative integer orders, whereas $Y_m$ is singular there.

The first-kind Bessel function has the series representation

\begin{equation}
\label{eq:bessel-series}
J_m(x)=\sum_{n=0}^{\infty}\frac{(-1)^n}{n!\Gamma(n+m+1)}\left(\frac{x}{2}\right)^{2n+m}.
\end{equation}

Here $\Gamma$ denotes the Gamma function.

For sufficiently large $x$, direct evaluation of the series can become numerically inefficient. We therefore also use the asymptotic form
\begin{equation}
\label{eq:bessel_approx}
J_m(x)=\sqrt{\frac{2}{\pi x}}\left[\cos\left(x-\frac{m\pi}{2}-\frac{\pi}{4}\right)+\mathcal{O}(|x|^{-1})\right].
\end{equation}

For integer $m$ we will also use
\begin{equation}
J_{-m}(x)=(-1)^mJ_m(x).
\end{equation}

\begin{figure}[h]
    \centering
    \includegraphics[width=0.55\linewidth]{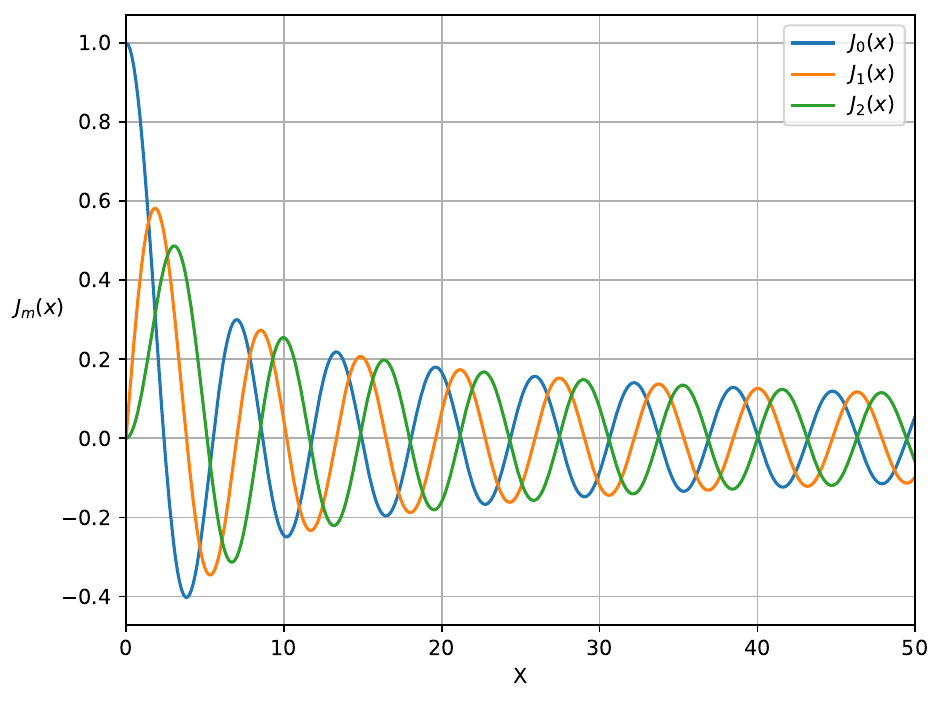}
    \caption{First-kind Bessel function $J_m$ evaluated at orders
    $m=0,1,2$. The numerical implementation switches to the asymptotic
    approximation in \cref{eq:bessel_approx} for sufficiently large $x$.}
    \label{fig:bessel1}
\end{figure}

\FloatBarrier
\subsubsection{Modified Bessel Functions}

Modified Bessel functions arise naturally in the Weber exponential integrals used later in the construction. The modified Bessel function of the first kind is
\begin{equation}
\label{eq:mod1}
I_m(x)=i^{-m}J_m(ix)=\sum_{n=0}^{\infty}\frac{1}{n!\Gamma(n+m+1)}\left(\frac{x}{2}\right)^{2n+m}.
\end{equation}

\begin{remark}
    For higher orders, when combined with an exponential factor, $I_m$ can grow large enough to cause numerical overflow. To avoid this we work with the exponentially scaled form. 

    \begin{equation}
        \widehat{I}_m(x) = I_m(x) \cdot e^{-|x|}
    \end{equation}
    \begin{equation}
         I_m(x) = e^{x} \cdot \widehat{I}_m(x)
    \end{equation}
\end{remark}

\begin{figure}[h]
    \centering
    \includegraphics[width=0.55\linewidth]{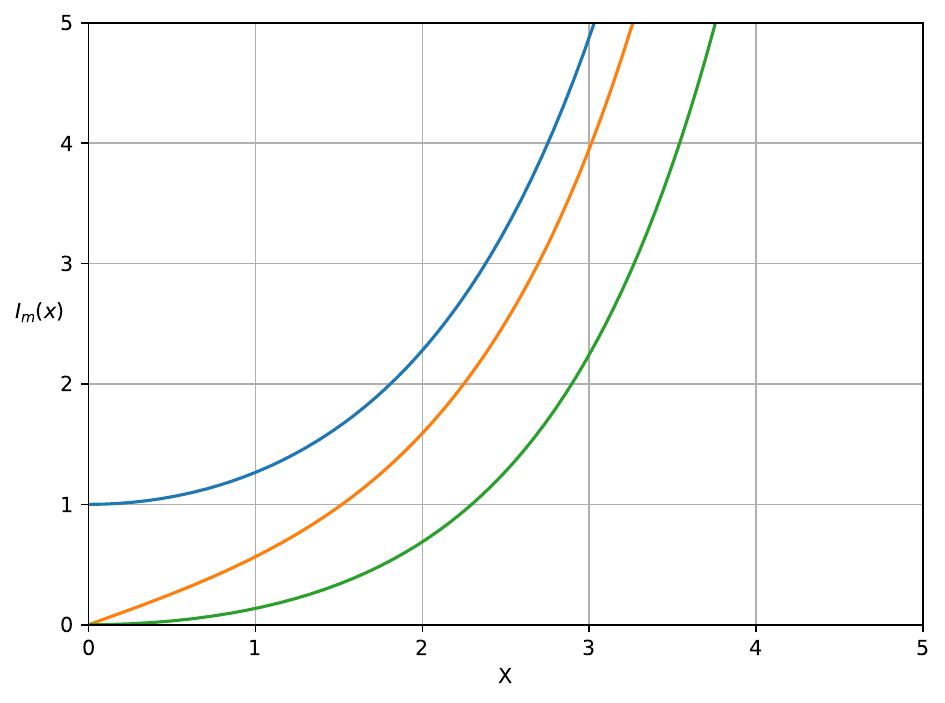}
    \caption{Modified first-kind Bessel function $I_m$ evaluated at
    orders $m=0,1,2$.}
    \label{fig:modbessel1}
\end{figure}

\FloatBarrier
\subsubsection{Derivative Identities for Bessel Functions}
\label{sec:firstderiv}

The derivative of the first-kind Bessel function is required when imposing the Neumann boundary condition.

\begin{lemma}[Derivative identity for $J_m$]
\label{lem:bessel-derivative}
For integer $m$,
\begin{equation}
J_m'(x)=\frac{1}{2}\left[J_{m-1}(x)-J_{m+1}(x)\right].
\label{eq:bessel-derivative}
\end{equation}
\end{lemma}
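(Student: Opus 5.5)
We prove the identity directly from the series representation \cref{eq:bessel-series}, first for $m\ge 1$ and then extending to all integers via $J_{-m}=(-1)^mJ_m$. The strategy is to derive the two one-sided recurrences
\begin{equation*}
\frac{d}{dx}\bigl[x^{m}J_m(x)\bigr]=x^{m}J_{m-1}(x),\qquad
\frac{d}{dx}\bigl[x^{-m}J_m(x)\bigr]=-x^{-m}J_{m+1}(x),
\end{equation*}
then expand the products by the product rule. This gives $J_m'+\tfrac{m}{x}J_m=J_{m-1}$ and $J_m'-\tfrac{m}{x}J_m=-J_{m+1}$. Adding the two relations and dividing by $2$ yields \cref{eq:bessel-derivative}.

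For the first recurrence, multiply the series by $x^m$ to get
\[
x^mJ_m(x)=\sum_{n\ge0}\frac{(-1)^n\,x^{2n+2m}}{2^{2n+m}\,n!\,\Gamma(n+m+1)}.
\]
Differentiate term by term, which is justified because the series converges locally uniformly, as does its differentiated series, being power series with infinite radius. Using $(2n+2m)/\Gamma(n+m+1)=2/\Gamma(n+m)$, each term becomes
\[
\frac{(-1)^n\,x^{2n+2m-1}}{2^{2n+m-1}\,n!\,\Gamma(n+m)}=x^m\,\frac{(-1)^n}{n!\,\Gamma(n+(m-1)+1)}\left(\frac{x}{2}\right)^{2n+m-1},
\]
which is exactly $x^mJ_{m-1}(x)$.

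For the second recurrence, the $n=0$ term of $x^{-m}J_m$ is constant, so it vanishes on differentiation. Differentiating the remaining terms and using $2n/n!=2/(n-1)!$, then shifting $n\mapsto n+1$, produces the series of $-x^{-m}J_{m+1}(x)$; the minus sign comes from $(-1)^{n+1}$.

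The only delicate point is integer orders where the Gamma function hits a pole, namely $\Gamma(n+m)$ with $m=0$ and $n=0$ in the first recurrence. There we interpret $1/\Gamma$ at nonpositive integers as $0$. Equivalently, the $n=0$ term of $x^0J_0$ is constant and drops out, and the surviving sum reindexes to $J_{-1}$ using $J_{-1}=-J_1$ from \cref{eq:bessel-series} with the $1/\Gamma$ convention. It is simplest to prove both recurrences for $m\ge0$ with this convention, after which the case $m<0$ follows by applying the result to $-m$ and multiplying by $(-1)^m$:
\[
J_m'=(-1)^mJ_{-m}'=\tfrac{(-1)^m}{2}\bigl[J_{-m-1}-J_{-m+1}\bigr]=\tfrac12\bigl[J_{m-1}-J_{m+1}\bigr],
\]
where the last step uses $J_{-m\mp1}=(-1)^{m\pm1}J_{m\pm1}$ and $(-1)^m(-1)^{m\pm1}=-1$. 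I expect this bookkeeping of index shifts and the $m=0$ edge case to be the main point requiring care; the rest is routine.
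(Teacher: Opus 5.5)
Your proof is correct and follows essentially the same route as the paper: both differentiate the series term by term to get the two one-sided recurrences $J_m' = J_{m-1} - \frac{m}{x}J_m$ and $J_m' = \frac{m}{x}J_m - J_{m+1}$, then add them. Your $\frac{d}{dx}[x^{\pm m}J_m]$ formulation is only a repackaging of the paper's two splittings of the coefficient $2n+m$. Your handling of the $1/\Gamma$ pole at $m=0$ and your reduction of negative $m$ via $J_{-m}=(-1)^mJ_m$ fill in points the paper passes over: its cancellation of $(n+m)$ in $S_A$ silently requires the same $1/\Gamma(0)=0$ convention when $n=m=0$.
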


\begin{proof}
Starting from the series representation of the Bessel function of the first kind,
\begin{equation}
J_m(x)
=
\sum_{n=0}^{\infty}
\frac{(-1)^n}{n!\,\Gamma(n+m+1)}
\left(\frac{x}{2}\right)^{2n+m},
\end{equation}
we differentiate term by term. Since only
$\left(\frac{x}{2}\right)^{2n+m}$ depends on $x$, the chain rule gives
\begin{equation}
\frac{d}{dx}
\left[
\left(\frac{x}{2}\right)^{2n+m}
\right]
=
(2n+m)
\left(\frac{x}{2}\right)^{2n+m-1}
\frac{1}{2}.
\end{equation}
Therefore,
\begin{equation}
J'_m(x)
=
\sum_{n=0}^{\infty}
\frac{(-1)^n(2n+m)}
{2\,n!\,\Gamma(n+m+1)}
\left(\frac{x}{2}\right)^{2n+m-1}.
\label{eq:bessel_derivative_series}
\end{equation}

We now derive two equivalent expressions for $J'_m(x)$. The first is obtained by
splitting the factor $(2n+m)$ as
\[
2n+m = 2(n+m)-m.
\]
Substituting this into \eqref{eq:bessel_derivative_series} gives
\begin{equation}
J'_m(x)
=
\sum_{n=0}^{\infty}
\frac{(-1)^n[2(n+m)-m]}
{2\,n!\,\Gamma(n+m+1)}
\left(\frac{x}{2}\right)^{2n+m-1}.
\end{equation}

We distribute the sum into two separate series, $S_A$ and $S_B$:
\begin{equation}
J'_m(x)=S_A(x)+S_B(x),
\end{equation}
where
\begin{align}
S_A(x)
&=
\sum_{n=0}^{\infty}
\frac{(-1)^n\,2(n+m)}
{2\,n!\,\Gamma(n+m+1)}
\left(\frac{x}{2}\right)^{2n+m-1},
\\
S_B(x)
&=
\sum_{n=0}^{\infty}
\frac{(-1)^n(-m)}
{2\,n!\,\Gamma(n+m+1)}
\left(\frac{x}{2}\right)^{2n+m-1}.
\end{align}

For $S_A$, the factor of $2$ cancels:
\begin{equation}
S_A(x)
=
\sum_{n=0}^{\infty}
\frac{(-1)^n(n+m)}
{n!\,\Gamma(n+m+1)}
\left(\frac{x}{2}\right)^{2n+m-1}.
\end{equation}
Using the Gamma-function identity
\[
\Gamma(z+1)=z\,\Gamma(z),
\]
with $z=n+m$, we have
\[
\Gamma(n+m+1)=(n+m)\,\Gamma(n+m).
\]
Hence,
\begin{equation}
S_A(x)
=
\sum_{n=0}^{\infty}
\frac{(-1)^n}
{n!\,\Gamma(n+m)}
\left(\frac{x}{2}\right)^{2n+m-1}.
\end{equation}
Comparing this expression with the series definition of $J_{m-1}(x)$,
\[
J_{m-1}(x)
=
\sum_{n=0}^{\infty}
\frac{(-1)^n}
{n!\,\Gamma(n+(m-1)+1)}
\left(\frac{x}{2}\right)^{2n+(m-1)},
\]
we see that
\begin{equation}
S_A(x)=J_{m-1}(x).
\end{equation}

We now evaluate $S_B$. First, we rewrite the power of $x/2$ so that it has
the same exponent as the series representation of $J_m(x)$:
\begin{equation}
\left(\frac{x}{2}\right)^{2n+m-1}
=
\left(\frac{x}{2}\right)^{2n+m}
\left(\frac{x}{2}\right)^{-1}
=
\left(\frac{x}{2}\right)^{2n+m}
\frac{2}{x}.
\end{equation}
Therefore,
\begin{align}
S_B(x)
&=
\sum_{n=0}^{\infty}
\frac{(-1)^n(-m)}
{2\,n!\,\Gamma(n+m+1)}
\left(\frac{x}{2}\right)^{2n+m}
\frac{2}{x}
\\
&=
-\frac{m}{x}
\sum_{n=0}^{\infty}
\frac{(-1)^n}
{n!\,\Gamma(n+m+1)}
\left(\frac{x}{2}\right)^{2n+m}
\\
&=
-\frac{m}{x}J_m(x).
\end{align}
Consequently, the first derivative identity is
\begin{equation}
J'_m(x)
=
J_{m-1}(x)-\frac{m}{x}J_m(x).
\label{eq:bessel_derivative_identity_1}
\end{equation}

We now derive a second expression for $J'_m(x)$. Returning to
\eqref{eq:bessel_derivative_series}, we instead retain $(2n+m)$ in its
original form and split it as
\[
2n+m=m+2n.
\]
This gives two new series, $S_C$ and $S_D$, such that
\begin{equation}
J'_m(x)=S_C(x)+S_D(x),
\end{equation}
where
\begin{align}
S_C(x)
&=
\sum_{n=0}^{\infty}
\frac{(-1)^n m}
{2\,n!\,\Gamma(n+m+1)}
\left(\frac{x}{2}\right)^{2n+m-1},
\\
S_D(x)
&=
\sum_{n=0}^{\infty}
\frac{(-1)^n\,2n}
{2\,n!\,\Gamma(n+m+1)}
\left(\frac{x}{2}\right)^{2n+m-1}.
\end{align}

The first of these sums is the negative of $S_B$, so
\begin{equation}
S_C(x)
=
\frac{m}{x}J_m(x).
\end{equation}

For the second series, the factor of $2$ cancels. Furthermore, the $n=0$ term vanishes because of the factor $n$. We may therefore begin the sum at $n=1$:
\begin{equation}
S_D(x)
=
\sum_{n=1}^{\infty}
\frac{(-1)^n n}
{n!\,\Gamma(n+m+1)}
\left(\frac{x}{2}\right)^{2n+m-1}.
\end{equation}
Using
\[
n!=n(n-1)!,
\]
the factor of $n$ cancels, giving
\begin{equation}
S_D(x)
=
\sum_{n=1}^{\infty}
\frac{(-1)^n}
{(n-1)!\,\Gamma(n+m+1)}
\left(\frac{x}{2}\right)^{2n+m-1}.
\end{equation}

To express this in the standard series form for a Bessel function, we
now shift the summation index. Let
\[
k=n-1,
\qquad\text{so that}\qquad
n=k+1.
\]
Since $n$ begins at $1$, the new index $k$ begins at $0$. We transform
each component of the summand individually.

First, the alternating sign becomes
\begin{equation}
(-1)^n
=
(-1)^{k+1}
=
(-1)^k(-1)
=
-(-1)^k.
\end{equation}

Second, the factorial becomes
\begin{equation}
(n-1)!
=
((k+1)-1)!
=
k!.
\end{equation}

Third, the Gamma-function argument transforms as
\begin{align}
\Gamma(n+m+1)
&=
\Gamma((k+1)+m+1)
\\
&=
\Gamma(k+m+2)
\\
&=
\Gamma\bigl(k+(m+1)+1\bigr).
\end{align}

Finally, the exponent of $x/2$ becomes
\begin{align}
2n+m-1
&=
2(k+1)+m-1
\\
&=
2k+m+1
\\
&=
2k+(m+1).
\end{align}

Substituting all of these transformations into $S_D(x)$ gives
\begin{align}
S_D(x)
&=
-\sum_{k=0}^{\infty}
\frac{(-1)^k}
{k!\,\Gamma\bigl(k+(m+1)+1\bigr)}
\left(\frac{x}{2}\right)^{2k+(m+1)}
\\
&=
-J_{m+1}(x).
\end{align}
Therefore,
\begin{equation}
J'_m(x)
=
\frac{m}{x}J_m(x)-J_{m+1}(x).
\label{eq:bessel_derivative_identity_2}
\end{equation}

Combining the two identities,
\eqref{eq:bessel_derivative_identity_1} and \eqref{eq:bessel_derivative_identity_2},
allows the terms involving $mJ_m(x)/x$ to cancel:
\begin{align}
2J'_m(x)
&=
\left(
J_{m-1}(x)-\frac{m}{x}J_m(x)
\right)
+
\left(
\frac{m}{x}J_m(x)-J_{m+1}(x)
\right)
\\
&=
J_{m-1}(x)-J_{m+1}(x).
\end{align}
Dividing by $2$ yields the desired derivative identity,
\begin{equation}
J'_m(x)
=
\frac{1}{2}
\left[
J_{m-1}(x)-J_{m+1}(x)
\right].
\end{equation}
\end{proof}

\begin{figure}[h]
    \centering
    \includegraphics[width=0.55\linewidth]{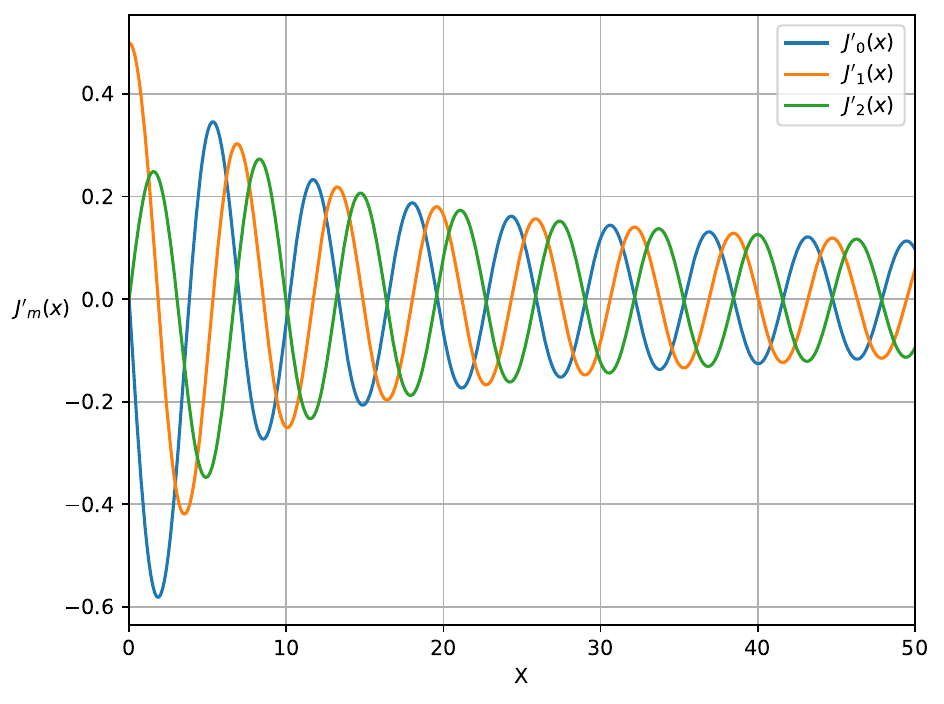}
    \caption{First-kind Bessel derivative $J'_m$ evaluated at orders
    $m=0,1,2$.}
    \label{fig:derivbessel1}
\end{figure}

\FloatBarrier
\subsection{Wavelet Properties}
\label{sec:properties}

Before constructing the Fourier-Bessel wavelets, we introduce the properties used throughout the remainder of the notes.

\begin{definition}[Zero-mean condition]
\label{def:zero-mean}
For the wavelet family considered here, we impose the zero-mean condition
\begin{equation}
\int_{\R^d}\psi(x)\,dx=0.
\label{eq:zero-mean}
\end{equation}
This ensures that the wavelet has no response to a spatially constant
component.
\end{definition}

We additionally impose an $L^p$ normalisation according to the intended application. An $L^2$ normalisation is appropriate when the total energy of the wavelet should remain constant. For frequency-domain peak consistency, we instead use a normalisation based on the radial Fourier
response. The $L^1$ norm provides the useful bound established below.

\begin{theorem}[Plancherel's theorem]
\label{thm:plancherel}
Under the one-dimensional Fourier-transform convention
\begin{equation}
\widehat{f}(\omega)=\int_{\R}f(t)e^{-i\omega t}\,dt,
\label{eq:fourier-convention-1d}
\end{equation}
we have
\begin{equation}
\fnorm{f}_2^2=\frac{1}{2\pi}\fnorm{\widehat{f}}_2^2.
\label{eq:plancherel}
\end{equation}
\end{theorem}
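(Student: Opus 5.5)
We first prove the identity for Schwartz functions $f$ and then extend it to all of $L^2(\R)$ by density. The Schwartz case follows from the Fourier inversion formula under the convention \eqref{eq:fourier-convention-1d}, namely
\[
f(t)=\frac{1}{2\pi}\int_{\R}\widehat{f}(\omega)e^{i\omega t}\,d\omega .
\]
We take this as known, or derive it by the standard Gaussian-regularisation argument: insert a factor $e^{-\varepsilon\omega^2/2}$, use $\int_\R e^{-\varepsilon\omega^2/2}e^{i\omega s}\,d\omega=\sqrt{2\pi/\varepsilon}\,e^{-s^2/(2\varepsilon)}$, and let $\varepsilon\to0$, noting that the resulting kernel is an approximate identity of mass $2\pi$. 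Tracking the constant $2\pi$ here is precisely what produces the factor $\frac{1}{2\pi}$ in \eqref{eq:plancherel}.

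With inversion in hand, we compute for Schwartz $f$:
\begin{align}
\fnorm{\widehat f}_2^2
&=\int_\R \widehat f(\omega)\overline{\widehat f(\omega)}\,d\omega
=\int_\R \widehat f(\omega)\int_\R \overline{f(t)}\,e^{i\omega t}\,dt\,d\omega \\
&=\int_\R \overline{f(t)}\left(\int_\R \widehat f(\omega)e^{i\omega t}\,d\omega\right)dt
=2\pi\int_\R \overline{f(t)}f(t)\,dt=2\pi\fnorm{f}_2^2 .
\end{align}
Exchanging the order of integration is justified by Fubini, since $f\in L^1$ and $\widehat f\in L^1$, so the double integrand is absolutely integrable. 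Dividing by $2\pi$ gives \eqref{eq:plancherel} on the Schwartz class.

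For general $f\in L^2(\R)$, the map $f\mapsto (2\pi)^{-1/2}\widehat f$ is an isometry on the dense subspace of Schwartz functions. It therefore extends uniquely to a bounded operator on $L^2$, and this extension is the meaning of $\widehat f$ for $f\in L^2$. If $f\in L^1\cap L^2$, approximating by Schwartz functions in both norms shows that the extension agrees with the integral \eqref{eq:fourier-convention-1d}. The identity then passes to the limit by continuity of norms.

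The main obstacle is the inversion step with the correct constant. It is the only place where genuine analysis enters, through the approximate-identity limit, and an error in the Gaussian normalisation there would change the $\frac{1}{2\pi}$ factor. Since the paper treats this as a standard background result, citing \cite{mallat_book} for inversion would also be acceptable.
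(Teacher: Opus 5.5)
Your argument is correct. Its core computation is the same as the paper's: substitute the inversion formula, exchange the integrals by Fubini, and recognise $\int_\R\overline{f(t)}e^{i\omega t}\,dt$ as $\overline{\widehat f(\omega)}$. The difference is the function class.

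The paper works directly with $f,g\in L^1\cap L^2$ and proves the inner-product form $\langle f,g\rangle=\frac{1}{2\pi}\langle\widehat f,\widehat g\rangle$, claiming that Fubini is justified by $f,g\in L^1\cap L^2$. However, both the pointwise inversion formula and the absolute integrability of $\widehat f(\omega)\overline{g(t)}e^{i\omega t}$ on $\R^2$ actually need $\widehat f\in L^1$. That fails, for example, for $f=\mathbf{1}_{[-1,1]}$, whose transform is $2\sin\omega/\omega$.

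By restricting first to Schwartz functions, where $f,\widehat f\in L^1$ holds automatically, you make the Fubini step genuinely valid. Your density step then recovers the paper's class $L^1\cap L^2$ through simultaneous $L^1$ and $L^2$ approximation. It also gives meaning to the statement for general $f\in L^2$, where the defining integral need not converge.

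The paper's version is shorter and gives the inner-product identity directly. Yours gives only the norm identity, but polarisation recovers the inner-product form at no cost. Both proofs rest on the inversion theorem with constant $\frac{1}{2\pi}$: the paper assumes it without comment, and you sketch it via Gaussian regularisation with the correct mass-$2\pi$ kernel.
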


\begin{proof}
Let $f, g \in L^1(\mathbb{R}) \cap L^2(\mathbb{R})$. Their inner product is
\begin{equation}
    \label{eq:inner}
   \langle f, g \rangle = \int_{\mathbb{R}} f(t)\overline{g(t)}\, dt.  
\end{equation}

Using the inverse Fourier transform,
\begin{equation}
    f(t) = \frac{1}{2\pi}\int_{\mathbb{R}} \hat f(\omega) e^{i\omega t}\, d\omega, 
\end{equation}

we substitute into \cref{eq:inner} and, invoking Fubini's theorem to exchange the order of integration
(justified since $f,g \in L^1 \cap L^2$), obtain

\begin{equation}
\label{eq:fubini}
    \langle f, g \rangle
    = \frac{1}{2\pi}\int_{\mathbb{R}} \hat f(\omega)
    \left( \int_{\mathbb{R}} \overline{g(t)}\, e^{i\omega t}\, dt \right) d\omega.   
\end{equation}

We now identify the bracketed integral. By definition \cref{eq:fourier-convention-1d}, the Fourier transform of $g$ is
$\hat g(\omega) = \int_{\mathbb{R}} g(t) e^{-i\omega t}\, dt$, so its complex conjugate is

\begin{equation}
\label{eq:conjugate}
    \overline{\hat g(\omega)}
    = \overline{\int_{\mathbb{R}} g(t) e^{-i\omega t}\, dt}
    = \int_{\mathbb{R}} \overline{g(t)}\, e^{i\omega t}\, dt,
\end{equation}

which is exactly the bracketed term in \cref{eq:fubini}. Substituting this identification gives

\begin{equation}
  \langle f, g \rangle = \frac{1}{2\pi} \int_{\mathbb{R}} \hat f(\omega) \overline{\hat g(\omega)}\, d\omega.
\end{equation}

Setting $g = f$ gives $\langle f,f\rangle = \|f\|_2^2$ on the left and
$\frac{1}{2\pi}\|\hat f\|_2^2$ on the right, which is \cref{eq:plancherel}.
\end{proof}

\begin{proposition}[$L^1$-$L^\infty$ Fourier bound]
\label{prop:l1-linf}
For $f\in L^1(\R^d)$,
\begin{equation}
\fnorm{\widehat{f}}_\infty\leq\fnorm{f}_1.
\label{eq:l1-linf}
\end{equation}
\end{proposition}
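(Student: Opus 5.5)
The plan is to bound the Fourier transform pointwise, using only its definition and the triangle inequality for integrals. First we fix the $d$-dimensional convention that extends \cref{eq:fourier-convention-1d}:
\begin{equation*}
\widehat{f}(\omega)=\int_{\R^d} f(x)\,e^{-i\langle \omega, x\rangle}\,dx,\qquad \omega\in\R^d .
\end{equation*}
Because $f\in L^1(\R^d)$ and $\fabs{e^{-i\langle\omega,x\rangle}}=1$, the integrand $f(x)e^{-i\langle\omega,x\rangle}$ is measurable and absolutely integrable for every fixed $\omega$. So $\widehat{f}(\omega)$ is a well-defined complex number for each $\omega$, and no density or limiting argument is needed. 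This contrasts with \cref{thm:plancherel}, where one has to restrict to $L^1\cap L^2$.

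For each fixed $\omega$ we then use the inequality $\fabs{\int g}\le\int\fabs{g}$ for complex-valued integrable $g$, together with $\fabs{e^{-i\langle\omega,x\rangle}}=1$. This gives
\begin{equation*}
\fabs{\widehat{f}(\omega)}\le\int_{\R^d}\fabs{f(x)}\,\fabs{e^{-i\langle\omega,x\rangle}}\,dx=\int_{\R^d}\fabs{f(x)}\,dx=\fnorm{f}_1 .
\end{equation*}
The right-hand side does not depend on $\omega$. Taking the supremum over $\omega\in\R^d$ (in fact a genuine supremum, since the bound holds at every point) therefore yields $\fnorm{\widehat{f}}_\infty\le\fnorm{f}_1$, which is \cref{eq:l1-linf}.

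The only step needing any care is the triangle inequality for complex-valued integrals, since the integrand is not real. If that step should be justified explicitly, we would proceed as follows. Choose a unimodular constant $c$ with $c\,\widehat{f}(\omega)=\fabs{\widehat{f}(\omega)}$. Then $\fabs{\widehat{f}(\omega)}=\int \operatorname{Re}\bigl(c f(x)e^{-i\langle\omega,x\rangle}\bigr)\,dx\le\int\fabs{f(x)}\,dx$.

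Finally, we note that the constant is exactly $1$ under this convention. The factor $\frac{1}{2\pi}$ that appears in \cref{eq:plancherel} enters only through the inverse transform and the $L^2$ identity, so it plays no role in this bound.
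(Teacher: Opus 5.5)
Your proof is correct and follows the paper's argument exactly: a pointwise bound on $\fabs{\widehat{f}(\omega)}$ from the triangle inequality for integrals and $\fabs{e^{-i\omega\cdot x}}=1$, followed by a supremum over $\omega$. Your explicit justification of the complex triangle inequality via a unimodular constant is a small addition that the paper leaves implicit.
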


\begin{proof}
For every frequency $\omega$,
\begin{align}
\fabs{\widehat{f}(\omega)}
&=\fabs{\int_{\R^d}
f(x)e^{-i\omega\cdot x}\,dx} \\
&\leq\int_{\R^d}\fabs{f(x)}\fabs{e^{-i\omega\cdot x}}\,dx \\
&=\int_{\R^d}\fabs{f(x)}\,dx=\fnorm{f}_1.
\end{align}
Taking the supremum over $\omega$ proves \cref{eq:l1-linf}.
\end{proof}

\FloatBarrier
\section{Fourier-Bessel Disk Harmonics}
\label{sec:disk-harmonics}

We now construct the Fourier-Bessel basis on the planar unit disk
\begin{equation}
\D=\left\{(\rho,\varphi):0\leq\rho\leq1,\;0\leq\varphi<2\pi\right\}.
\end{equation}

Fourier-Bessel basis functions arise as eigenfunctions of the Laplacian on the disk. Equivalently, they solve the Helmholtz equation subject to a boundary condition at $\rho=1$.

\FloatBarrier
\subsection{The Helmholtz Equation}

We consider
\begin{equation}
\label{eq:helm}
\frac{\partial^2f}{\partial\rho^2}
+
\frac{1}{\rho}\frac{\partial f}{\partial\rho}
+
\frac{1}{\rho^2}\frac{\partial^2f}{\partial\varphi^2}
=
-\lambda^2 f.
\end{equation}

We seek separable solutions of the form

\begin{equation}
  f(\rho,\varphi) = R(\rho)\Phi(\varphi).
\end{equation}

Substituting into \cref{eq:helm} gives

\begin{equation}
  \Phi \frac{d^2 R}{d\rho^2} + \frac{\Phi}{\rho}\frac{dR}{d\rho}
+ \frac{R}{\rho^2}\frac{d^2\Phi}{d\varphi^2} = -\lambda^2 R \Phi. 
\end{equation}

Dividing both sides by $R\Phi$ isolates the radial and angular dependence:

\begin{equation}
\frac{1}{R}\frac{d^2 R}{d\rho^2} + \frac{1}{\rho R}\frac{dR}{d\rho}
+ \frac{1}{\rho^2 \Phi}\frac{d^2\Phi}{d\varphi^2} = -\lambda^2.   
\end{equation}

Multiplying through by $\rho^2$ and collecting the angular term on one side gives

\begin{equation}
 \rho^2 \frac{R''}{R} + \rho \frac{R'}{R} + \lambda^2\rho^2
= -\frac{\Phi''}{\Phi}. 
\end{equation}

The left-hand side depends only on $\rho$ and the right-hand side only on $\varphi$. Both must therefore equal a common constant, which we write as $m^2$:

\begin{equation}
 -\frac{\Phi''}{\Phi} = \frac{\rho^2 R'' + \rho R' + \lambda^2\rho^2 R}{R} = m^2.  
\end{equation}

\begin{remark}
    We fix the sign of the separation constant as $+m^2$ rather than $-m^2$.
    This is required for the angular equation to admit periodic (rather than exponentially
    growing/decaying) solutions, since $2\pi$-periodicity of $\Phi$ is a physical requirement
    on the unit disk. The choice of sign is verified immediately below.
\end{remark} 

The angular equation is therefore

\begin{equation}
  \Phi'' + m^2 \Phi = 0.
\end{equation}

Imposing $2\pi$-periodicity gives

\begin{equation}
  \Phi_m(\varphi) = e^{im\varphi}, \qquad m \in \mathbb{Z}. 
\end{equation}

The radial equation is
\begin{equation}
\rho^2R''+\rho R'
+
(\lambda^2\rho^2-m^2)R
=
0,
\label{eq:radial-bessel}
\end{equation}
which is Bessel's equation under the change of variable
$x=\lambda\rho$. The solution regular at the origin is therefore
\begin{equation}
R_{m,k}(\rho)
=
J_m(\lambda_{m,k}\rho).
\end{equation}

\FloatBarrier
\subsection{Neumann Boundary Condition}

To obtain the disk harmonics used here, we impose the Neumann condition
\begin{equation}
\left.
\frac{\partial R_{m,k}}{\partial\rho}
\right|_{\rho=1}
=
0.
\end{equation}

Since
\begin{equation}
\frac{\partial}{\partial\rho}
J_m(\lambda_{m,k}\rho)
=
\lambda_{m,k}J_m'(\lambda_{m,k}\rho),
\end{equation}
the boundary condition is equivalent to
\begin{equation}
J_m'(\lambda_{m,k})=0.
\label{eq:neumann-condition}
\end{equation}

\begin{proposition}[Neumann eigenvalues]
\label{prop:neumann-eigenvalues}
For each angular order $m$, the radial eigenvalues are given by the positive roots $\lambda_{m,k}$ of
\cref{eq:neumann-condition}.
\end{proposition}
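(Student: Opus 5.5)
The plan is to show that, for fixed angular order $m$, a separable function $f(\rho,\varphi)=R(\rho)\Phi_m(\varphi)$ that is regular at the origin solves \cref{eq:helm} with the Neumann condition exactly when $\lambda$ is a positive root of $J_m'$. I would also check that there are countably many such roots tending to infinity, so that the index $k$ is meaningful.

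The first step uses the separation already carried out. The radial factor must satisfy \cref{eq:radial-bessel}. For $\lambda>0$ the substitution $x=\lambda\rho$ turns this into Bessel's equation \cref{eq:bessel}, whose general solution is $aJ_m(\lambda\rho)+bY_m(\lambda\rho)$. Regularity at $\rho=0$ forces $b=0$, as noted in \cref{sec:bessel-first-second}. By the chain rule, the Neumann condition $R'(1)=0$ becomes $a\lambda J_m'(\lambda)=0$. A nontrivial solution with $\lambda>0$ therefore exists if and only if $J_m'(\lambda)=0$, which is \cref{eq:neumann-condition}.

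Two bookkeeping points follow. Since $J_{-m}=(-1)^mJ_m$, the roots for $-m$ coincide with those for $m$, so it suffices to take $m\ge 0$. Since $J_m'$ is odd or even according to the parity of $m$, its negative roots mirror its positive ones and contribute nothing new. Hence only positive roots need to be listed.

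The second step is to justify enumerating the roots as $\lambda_{m,1}<\lambda_{m,2}<\cdots$. The roots are isolated because $J_m'$ is a nonzero entire function. There are infinitely many of them because differentiating the asymptotic form \cref{eq:bessel_approx} gives
\begin{equation*}
J_m'(x)=-\sqrt{\frac{2}{\pi x}}\left[\sin\left(x-\frac{m\pi}{2}-\frac{\pi}{4}\right)+\mathcal{O}(x^{-1})\right].
\end{equation*}
This changes sign on each interval of length $\pi$ for large $x$, so by the intermediate value theorem there is a root near each value $x=\frac{m\pi}{2}+\frac{\pi}{4}+j\pi$. This also yields the asymptotic spacing $\pi$ mentioned in the introduction.

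The main obstacle is rigour in the asymptotic step. The expansion \cref{eq:bessel_approx} is stated only for $J_m$, so differentiating it term by term needs separate justification. To avoid this, I would first try applying \cref{lem:bessel-derivative}, $J_m'=\tfrac12(J_{m-1}-J_{m+1})$, together with the asymptotic form \cref{eq:bessel_approx} for $J_{m\pm1}$. Combining the two cosines through the sum-to-product identity gives the sine expression directly, with the error term inherited from \cref{eq:bessel_approx}. This avoids differentiating an asymptotic expansion altogether.

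A minor subtlety is the case $m=0$, $\lambda=0$. There the constant function satisfies the Neumann condition, but it corresponds to the trivial zero eigenvalue, which the statement excludes by restricting to positive roots.
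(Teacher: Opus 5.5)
Your argument is correct, and its core matches the paper's own justification, which is just the derivation preceding the proposition: the solution of \cref{eq:radial-bessel} that is regular at the origin is $J_m(\lambda\rho)$, and by the chain rule the Neumann condition becomes $\lambda J_m'(\lambda)=0$, i.e.\ \cref{eq:neumann-condition}. Your parity bookkeeping and your proof that there are infinitely many roots (via \cref{lem:bessel-derivative} and \cref{eq:bessel_approx}) go beyond the paper, which only invokes McMahon's expansion informally; one caveat is that the intermediate value theorem gives at least one root per interval, so your aside that $\lambda_{m,k+1}-\lambda_{m,k}\to\pi$ also needs uniqueness of the root in each interval, which you could get from the analogous asymptotic for $J_m''$.
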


\begin{definition}[Fourier-Bessel disk harmonic]
\label{def:disk-harmonic}
For angular order $m$ and root index $k$, define
\begin{equation}
D_{m,k}(\rho,\varphi)
=
N_{m,k}
J_m(\lambda_{m,k}\rho)e^{im\varphi},
\label{eq:disk-bessel}
\end{equation}
where $N_{m,k}$ is chosen according to the desired basis normalisation. For the orthonormal disk basis used in \cite{Shaqfa_2025}, the normalisation is
\begin{equation}
\label{eq:norm_ortho}
    N_{m,k}=\frac{J_m(\lambda_{m,k})^{-1}}{\sqrt{\pi(1-\frac{m^2}{\lambda_{m,k}^2}})}
\end{equation}
\end{definition}

\begin{remark}
The distinction between the root index $k$ and the eigenvalue
$\lambda_{m,k}$ is important throughout the construction. In particular, $k$ is a discrete index, whereas $\lambda_{m,k}$ determines the radial oscillation frequency.
\end{remark}

\FloatBarrier
\subsection{Root Finding Using Muller's Method}
\label{sec:muller}

The eigenvalues are obtained by locating the roots of $J_m'$. Following \cite{Shaqfa_2025}, we use Muller's method. Given three starting estimates, the method iteratively fits a quadratic parabola and uses one of its roots as the next approximation. It can therefore be viewed as a higher-order extension of the secant method.

For initial estimates, we use McMahon's asymptotic expansion \cite{Abramowitz1964,DLMF}. Keeping the first correction term gives
\begin{equation}
\lambda_{m,k}\approx\beta_{m,k}-\frac{4m^2+3}{8\beta_{m,k}},
\label{eq:mcmahon}
\end{equation}
where
\begin{equation}
\beta_{m,k}=\left(k+\frac{m}{2}-\frac{3}{4}\right)\pi.
\label{eq:beta}
\end{equation}

The leading term in \cref{eq:mcmahon} is linear in the root index $k$ with slope $\pi$. This explains why the spacing between consecutive eigenvalues approaches $\pi$ asymptotically. The correction term also shows why larger angular orders require larger $k$ before this limiting
spacing becomes apparent.

\begin{remark}
The asymptotic expansion is used here primarily to initialise numerical root finding and to interpret the frequency spacing. The numerical eigenvalues used in the wavelet construction are obtained from the roots of $J_m'$ rather than from the asymptotic approximation alone.
\end{remark}

\begin{figure}[h]
    \centering
    \includegraphics[width=0.8\linewidth]{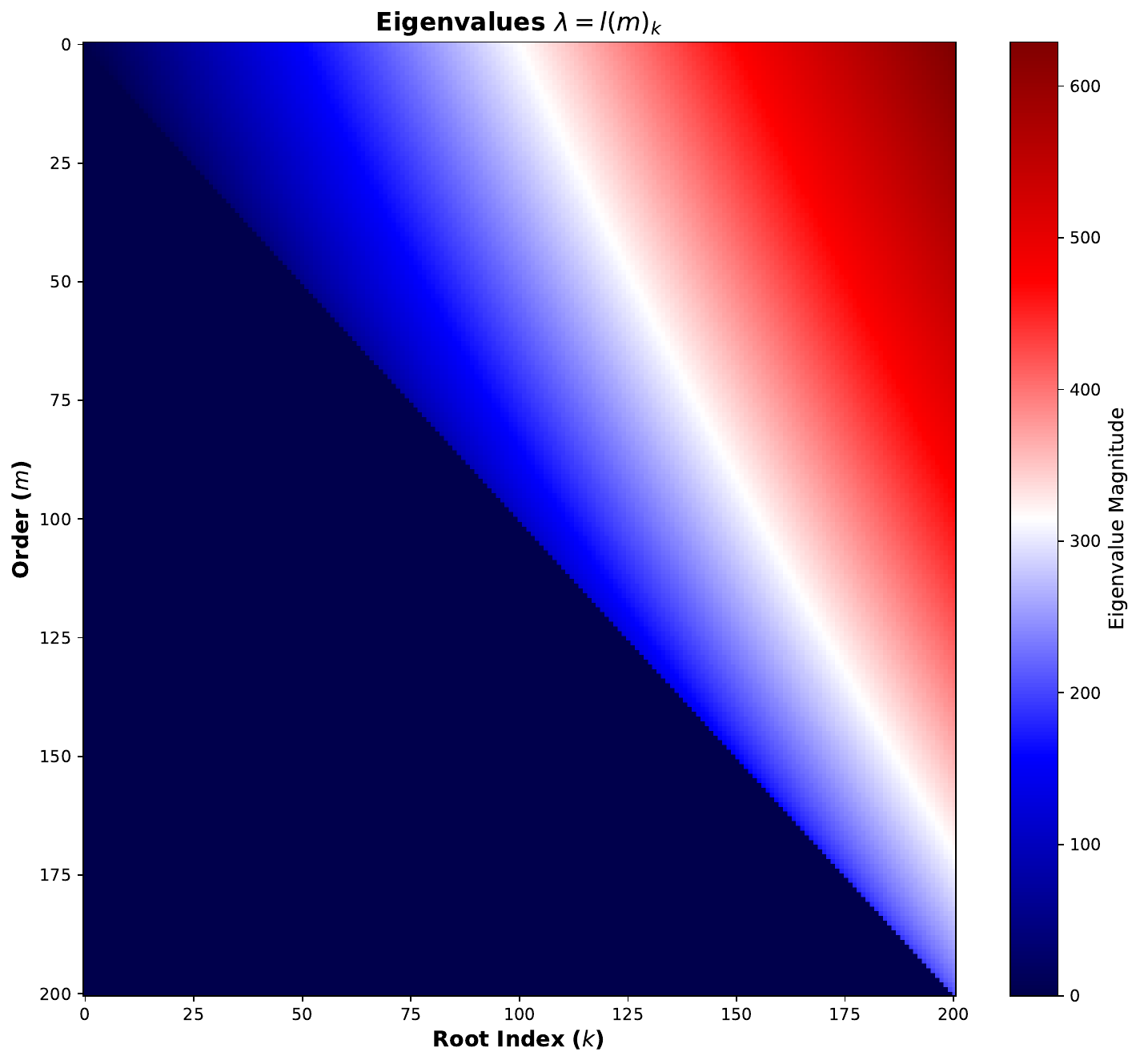}
    \caption{Reproduction of \cite{Shaqfa_2025}: eigenvalues associated
    with the roots of $J_m'$ for different angular orders and root
    indices.}
    \label{fig:eigens}
\end{figure}

\FloatBarrier
\section{Construction of Fourier-Bessel Wavelets}

Having established the Fourier-Bessel basis on the unit disk, we now extend it to a wavelet family defined on the continuous plane. We use a Gaussian envelope to localise the oscillatory basis function in space. For notational simplicity, within this section we sometimes write $\lambda=\lambda_{m,k}$ when the indices are unambiguous.

\begin{definition}[Fourier-Bessel wavelet]
\label{def:fb-wavelet}
The spatial Fourier-Bessel wavelet of angular order $m$ and root index
$k$ is
\begin{equation}
\psi_{m,k}(\rho,\varphi)
=
N_{m,k}
e^{-\rho^2/(2\sigma^2)}
\left[
J_m(\lambda_{m,k}\rho)-K_{m,k}
\right]
e^{im\varphi},
\label{eq:fb-wavelet}
\end{equation}
where $K_{m,k}$ enforces the zero-mean condition and $N_{m,k}$ denotes the application-specific normalisation.
\end{definition}

\begin{remark}[Radial and angular control]
Unlike an affine wavelet construction in which a mother wavelet is repeatedly scaled and rotated, the present family keeps the Gaussian envelope fixed while using $\lambda_{m,k}$ to control radial oscillation. Angular selectivity is provided directly by the factor $e^{im\varphi}$.
\end{remark}

\begin{figure}[h]
    \centering
    \includegraphics[width=0.8\linewidth]{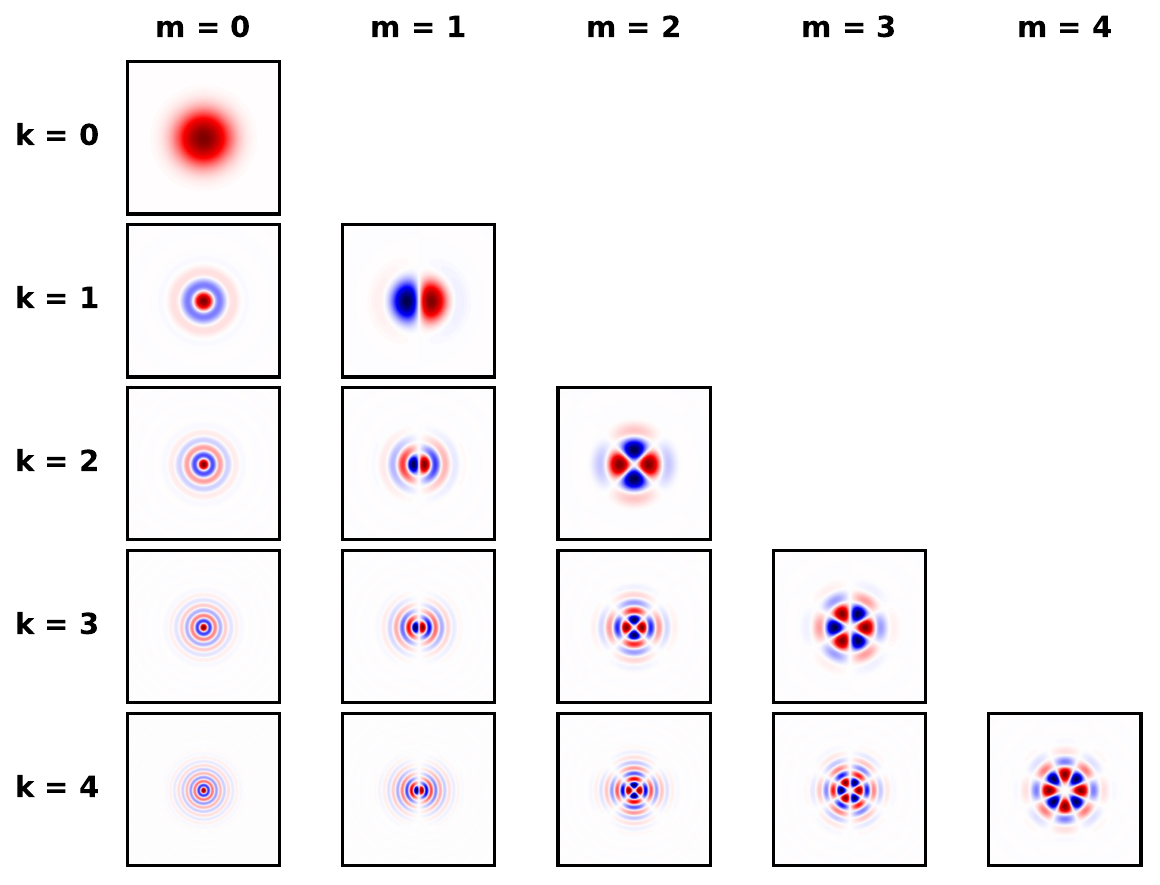}
    \caption{Fourier-Bessel wavelet bank for $m,k=5$. Only the real
    components are shown. The imaginary components are obtained by a
    $\pi/(2m)$ angular phase shift. Here $\sigma=1$.}
    \label{fig:spatial}
\end{figure}

\begin{figure}[h]
    \centering
    \includegraphics[width=1\linewidth]{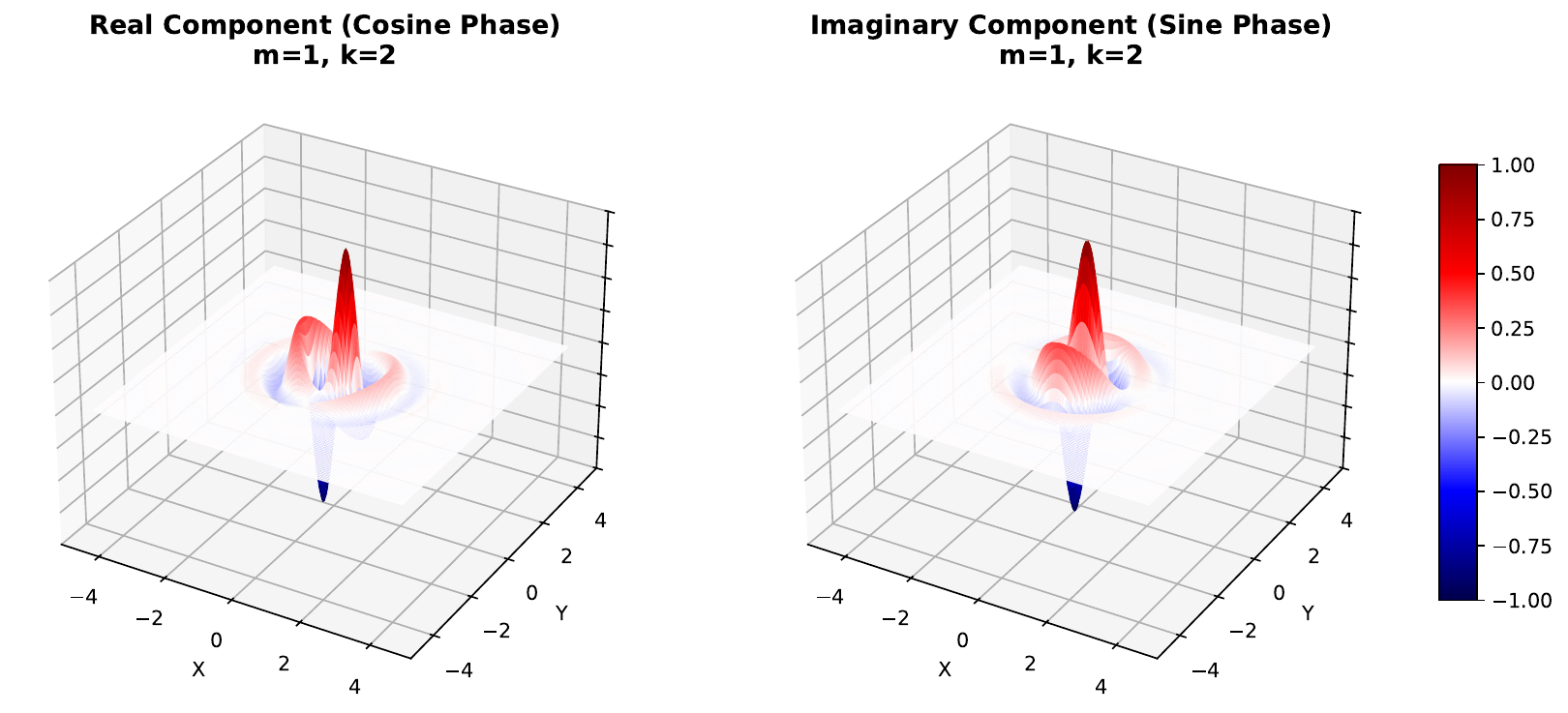}
    \caption{3D rendering of the real and imaginary parts of the spatial
    Fourier-Bessel wavelet with $m=1$ and $k=2$.}
    \label{fig:3d_spatial}
\end{figure}

\FloatBarrier
\subsection{Low-Pass Filter}

To cover the remaining low frequencies, we introduce a Gaussian low-pass filter,
\begin{equation}
\phi(\rho)=\frac{1}{2\pi\sigma^2}\exp\left(-\frac{\rho^2}{2\sigma^2}\right).
\label{eq:lowpass-spatial}
\end{equation}

Under the Fourier-transform convention used later, its frequency-domain representation is
\begin{equation}
\widehat{\phi}(q)=\exp\left(-\frac{\sigma^2q^2}{2}\right).
\label{eq:lowpass-fourier}
\end{equation}

\FloatBarrier
\subsection{Zero-Mean Correction}
\label{sec:zero-mean-correction}

For $m\geq1$, the angular factor satisfies
\begin{equation}
\int_0^{2\pi}e^{im\varphi}\,d\varphi=0,
\end{equation}
so the zero-mean condition is automatically satisfied. The case $m=0$ requires an explicit correction.

\begin{proposition}[Zero-mean correction for $m=0$]
\label{prop:zero-mean}
For $m=0$, the unique constant $K_{0,k}$ satisfying
\begin{equation}
\int_{\R^2}\psi_{0,k}(x)\,dx=0
\end{equation}
is
\begin{equation}
K_{0,k}=\exp\left(-\frac{\lambda_{0,k}^2\sigma^2}{2}\right).
\label{eq:zero-mean-K}
\end{equation}
\end{proposition}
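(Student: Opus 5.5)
To prove the proposition, I will write the zero-mean integral in polar coordinates and reduce it to two standard Gaussian radial integrals. For $m=0$ the angular factor is identically $1$, so with $dx=\rho\,d\rho\,d\varphi$ and $\lambda=\lambda_{0,k}$ the condition $\int_{\R^2}\psi_{0,k}\,dx=0$ becomes
\begin{equation*}
2\pi N_{0,k}\left[\int_0^\infty e^{-\rho^2/(2\sigma^2)}J_0(\lambda\rho)\,\rho\,d\rho-K_{0,k}\int_0^\infty e^{-\rho^2/(2\sigma^2)}\rho\,d\rho\right]=0.
\end{equation*}
Since $N_{0,k}\neq0$ (it is a normalisation constant), this is a \emph{linear} equation in $K_{0,k}$. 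The coefficient of $K_{0,k}$ is the elementary integral $\int_0^\infty e^{-\rho^2/(2\sigma^2)}\rho\,d\rho=\sigma^2>0$. Because this coefficient is nonzero, the solution is unique:
\begin{equation*}
K_{0,k}=\sigma^{-2}\int_0^\infty e^{-\rho^2/(2\sigma^2)}J_0(\lambda\rho)\rho\,d\rho .
\end{equation*}
Both integrals converge absolutely, since $|J_0|\le1$ (immediate from the integral representation, or bounded by the series) and the Gaussian decays. Hence splitting the integral is legitimate.

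The main step is evaluating the Weber-type integral
\begin{equation*}
W(\lambda)=\int_0^\infty e^{-a\rho^2}J_0(\lambda\rho)\rho\,d\rho,\qquad a=\tfrac{1}{2\sigma^2},
\end{equation*}
and showing that $W(\lambda)=\frac{1}{2a}e^{-\lambda^2/(4a)}=\sigma^2e^{-\lambda^2\sigma^2/2}$.

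I will do this by inserting the series \cref{eq:bessel-series} for $J_0$ and integrating term by term. Term-by-term integration is justified by dominated convergence, because $\sum_n \frac{(\lambda\rho/2)^{2n}}{(n!)^2}\le e^{\lambda\rho}$ is integrable against the Gaussian. Each term is a moment: substituting $u=a\rho^2$ gives
\begin{equation*}
\int_0^\infty e^{-a\rho^2}\rho^{2n+1}\,d\rho=\frac{n!}{2a^{n+1}}.
\end{equation*}
The series then collapses to
\begin{equation*}
W(\lambda)=\frac{1}{2a}\sum_n\frac{(-1)^n}{n!}\left(\frac{\lambda^2}{4a}\right)^n=\frac{1}{2a}e^{-\lambda^2/(4a)}.
\end{equation*}
An alternative route is to recognise $W$ as $(2\pi)^{-1}$ times the 2D Fourier transform of a radial Gaussian, using $\int_0^{2\pi}e^{-i\lambda\rho\cos\varphi}d\varphi=2\pi J_0(\lambda\rho)$. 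This is consistent with \cref{eq:lowpass-fourier}. I prefer the series argument because it relies only on results already stated in the paper.

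Dividing $W(\lambda)$ by $\sigma^2$ gives $K_{0,k}=\exp(-\lambda_{0,k}^2\sigma^2/2)$, which is the claimed formula. The only real obstacle is the justification of the term-by-term integration in the Weber integral; everything else is elementary.
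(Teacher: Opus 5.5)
Your proof is correct and follows the paper's argument: pass to polar coordinates, split the radial integral so the equation becomes linear in $K_{0,k}$ with coefficient $\sigma^2$, evaluate the Gaussian--$J_0$ integral as $\sigma^2 e^{-\lambda^2\sigma^2/2}$, and solve. The differences are minor. The paper cites Weber's first exponential integral from Watson, whereas you derive it by integrating the series for $J_0$ term by term, with a valid dominated-convergence justification. You also make uniqueness explicit through the nonzero coefficient $\sigma^2$ and $N_{0,k}\neq 0$, which the paper leaves implicit.
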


\begin{proof}
The zero-mean condition reduces to
\begin{equation}
\int_0^\infty e^{-\rho^2/(2\sigma^2)}\left[J_0(\lambda\rho)-K\right]\rho\,d\rho=0.
\end{equation}

Separating the two terms gives
\begin{equation}
\int_0^\infty e^{-\rho^2/(2\sigma^2)}J_0(\lambda\rho)\rho\,d\rho=
K\int_0^\infty e^{-\rho^2/(2\sigma^2)}\rho\,d\rho.
\end{equation}

The first integral is Weber's first exponential integral
\cite{watson1995treatise},
\begin{equation}
\int_0^\infty e^{-t^2\rho^2}J_0(\lambda\rho)\rho\,d\rho
=\frac{1}{2t^2}\exp\left(-\frac{\lambda^2}{4t^2}\right).
\label{eq:weber-first}
\end{equation}

Setting $t^2=1/(2\sigma^2)$ gives
\begin{equation}
\int_0^\infty e^{-\rho^2/(2\sigma^2)}J_0(\lambda\rho)\rho\,d\rho
=\sigma^2e^{-\lambda^2\sigma^2/2}.
\end{equation}

For the second integral, the substitution
$u=\rho^2/(2\sigma^2)$ gives
\begin{equation}
\int_0^\infty e^{-\rho^2/(2\sigma^2)}\rho\,d\rho=\sigma^2.
\end{equation}

Therefore,
\begin{equation}
\sigma^2e^{-\lambda^2\sigma^2/2}-K\sigma^2=0,
\end{equation}
and hence \cref{eq:zero-mean-K}.
\end{proof}

\FloatBarrier
\subsection{$L^2$ Normalisation}

We now choose the normalisation constant so that
\begin{equation}
\fnorm{\psi_{m,k}}_2=1.
\end{equation}
The $m=0$ and $m\geq1$ cases differ because the zero-mean correction is present only for $m=0$.

\subsubsection{Angular orders $m\geq1$}

For $m\geq1$, define
\begin{equation}
N^{(2)}_{m,k}
=
\left[
\pi\sigma^2
e^{-\lambda_{m,k}^2\sigma^2/2}
I_m\left(
\frac{\lambda_{m,k}^2\sigma^2}{2}
\right)
\right]^{-1/2}.
\label{eq:l2-positive}
\end{equation}

\begin{proposition}[$L^2$ normalisation for $m\geq1$]
\label{prop:l2-positive}
The constant in \cref{eq:l2-positive} satisfies
$\fnorm{\psi_{m,k}}_2=1$ for $m\geq1$.
\end{proposition}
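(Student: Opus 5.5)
Since $K_{m,k}=0$ for $m\geq1$ (the zero-mean correction of \cref{prop:zero-mean} is only needed for $m=0$), the wavelet is $\psi_{m,k}=N_{m,k}e^{-\rho^2/(2\sigma^2)}J_m(\lambda\rho)e^{im\varphi}$ with $\lambda=\lambda_{m,k}$. We compute its squared $L^2$ norm in polar coordinates. Because $|e^{im\varphi}|=1$ and $J_m$ is real for real argument, the angular integral contributes a factor $2\pi$. This gives
\begin{equation*}
\fnorm{\psi_{m,k}}_2^2 = 2\pi N_{m,k}^2\int_0^\infty e^{-\rho^2/\sigma^2}J_m(\lambda\rho)^2\,\rho\,d\rho .
\end{equation*}
The whole proof therefore reduces to evaluating this radial integral in closed form.

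For the radial integral I would invoke Weber's second exponential integral \cite{watson1995treatise}, the companion of \cref{eq:weber-first}:
\begin{equation*}
\int_0^\infty e^{-p^2\rho^2}J_m(a\rho)J_m(b\rho)\,\rho\,d\rho=\frac{1}{2p^2}\exp\left(-\frac{a^2+b^2}{4p^2}\right)I_m\left(\frac{ab}{2p^2}\right),
\end{equation*}
valid for $\operatorname{Re}p^2>0$ and integer $m\geq0$. We set $a=b=\lambda$ and $p^2=1/\sigma^2$. Then $1/(2p^2)=\sigma^2/2$, the exponent becomes $-\lambda^2\sigma^2/2$, and the argument of $I_m$ is $\lambda^2\sigma^2/2$. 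Hence the radial integral equals $\tfrac{\sigma^2}{2}e^{-\lambda^2\sigma^2/2}I_m(\lambda^2\sigma^2/2)$.

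Multiplying by $2\pi$ gives
\begin{equation*}
\fnorm{\psi_{m,k}}_2^2=N_{m,k}^2\,\pi\sigma^2e^{-\lambda^2\sigma^2/2}I_m(\lambda^2\sigma^2/2).
\end{equation*}
Since $I_m$ is positive for positive argument (every term of its series \cref{eq:mod1} is positive), the bracket in \cref{eq:l2-positive} is positive. Choosing $N_{m,k}=N^{(2)}_{m,k}$ then gives norm one.

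The main obstacle is justifying Weber's second integral, since the paper only quotes the first. If a self-contained argument is wanted, I would proceed as follows.
\begin{itemize}
\item Use Graf's or Neumann's addition theorem to write $J_m(a\rho)J_m(b\rho)=\frac{1}{\pi}\int_0^\pi J_0\bigl(\rho\sqrt{a^2+b^2-2ab\cos\theta}\bigr)\cos(m\theta)\,d\theta$.
\item Apply \cref{eq:weber-first} inside the $\theta$-integral, after exchanging the order of integration by Fubini, which is justified by Gaussian decay.
\item Recognise $\frac{1}{\pi}\int_0^\pi e^{z\cos\theta}\cos(m\theta)\,d\theta=I_m(z)$.
\end{itemize}
This reproduces the stated formula. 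Everything else is routine bookkeeping of the constants.
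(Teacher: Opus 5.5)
Your proposal is correct and follows the paper's proof essentially step for step. You set $K_{m,k}=0$, reduce to $2\pi N_{m,k}^2\int_0^\infty e^{-\rho^2/\sigma^2}J_m(\lambda\rho)^2\rho\,d\rho$, evaluate this with Weber's second exponential integral at $\lambda_1=\lambda_2=\lambda$ and $t^2=1/\sigma^2$, and solve for the constant. The paper simply cites Weber's second integral from Watson (it is quoted in this proof), so your addition-theorem derivation of it and your positivity remark on $I_m$ are valid optional supplements rather than required steps.
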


\begin{proof}
For $m\geq1$, $K_{m,k}=0$, so
\begin{align}
1
&=
2\pi\left(N^{(2)}_{m,k}\right)^2
\int_0^\infty
e^{-\rho^2/\sigma^2}
J_m(\lambda\rho)^2
\rho\,d\rho.
\end{align}

We use Weber's second exponential integral
\cite{watson1995treatise},
\begin{equation}
\int_0^\infty
e^{-t^2\rho^2}
J_m(\lambda_1\rho)
J_m(\lambda_2\rho)
\rho\,d\rho
=
\frac{1}{2t^2}
e^{-(\lambda_1^2+\lambda_2^2)/(4t^2)}
I_m\left(
\frac{\lambda_1\lambda_2}{2t^2}
\right).
\label{eq:weber-second}
\end{equation}

Setting $\lambda_1=\lambda_2=\lambda$ and $t^2=1/\sigma^2$ gives
\begin{equation}
\int_0^\infty
e^{-\rho^2/\sigma^2}
J_m(\lambda\rho)^2
\rho\,d\rho
=
\frac{\sigma^2}{2}
e^{-\lambda^2\sigma^2/2}
I_m\left(\frac{\lambda^2\sigma^2}{2}\right).
\end{equation}

Substitution into the norm condition yields
\begin{equation}
\left(N^{(2)}_{m,k}\right)^2
\pi\sigma^2
e^{-\lambda^2\sigma^2/2}
I_m\left(\frac{\lambda^2\sigma^2}{2}\right)
=
1,
\end{equation}
which gives \cref{eq:l2-positive}.
\end{proof}

\subsubsection{Zeroth angular order $m=0$}

For $m=0$, the zero-mean correction contributes to the norm. Define
\begin{equation}
N^{(2)}_{0,k}
=
\left[
\pi\sigma^2
\left(
e^{-\lambda^2\sigma^2/2}
I_0\left(\frac{\lambda^2\sigma^2}{2}\right)
-
2e^{-3\sigma^2\lambda^2/4}
+
e^{-\sigma^2\lambda^2}
\right)
\right]^{-1/2}.
\label{eq:l2-zero}
\end{equation}

\begin{proposition}[$L^2$ normalisation for $m=0$]
\label{prop:l2-zero}
The constant in \cref{eq:l2-zero} satisfies
$\fnorm{\psi_{0,k}}_2=1$.
\end{proposition}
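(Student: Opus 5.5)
We compute $\fnorm{\psi_{0,k}}_2^2$ directly and set it equal to one. For $m=0$ the angular factor is $e^{i0\varphi}=1$, and $K_{0,k}=e^{-\lambda^2\sigma^2/2}$ by \cref{prop:zero-mean}. Integrating over the angle contributes a factor $2\pi$, so
\begin{equation*}
\fnorm{\psi_{0,k}}_2^2
=2\pi\left(N^{(2)}_{0,k}\right)^2\int_0^\infty e^{-\rho^2/\sigma^2}\left[J_0(\lambda\rho)-K\right]^2\rho\,d\rho .
\end{equation*}
Since $J_0$ and $K$ are real, the square expands into three terms: $J_0(\lambda\rho)^2$, the cross term $-2KJ_0(\lambda\rho)$, and $K^2$. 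We evaluate each with the Gaussian weight $e^{-\rho^2/\sigma^2}$, that is, with $t^2=1/\sigma^2$.

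The first term is exactly the integral already computed in the proof of \cref{prop:l2-positive}. Weber's second exponential integral \cref{eq:weber-second}, with $m=0$, gives $\frac{\sigma^2}{2}e^{-\lambda^2\sigma^2/2}I_0(\lambda^2\sigma^2/2)$. For the cross term we use Weber's first integral \cref{eq:weber-first} with $t^2=1/\sigma^2$ rather than $1/(2\sigma^2)$. This gives $\int_0^\infty e^{-\rho^2/\sigma^2}J_0(\lambda\rho)\rho\,d\rho=\frac{\sigma^2}{2}e^{-\lambda^2\sigma^2/4}$. Multiplying by $-2K=-2e^{-\lambda^2\sigma^2/2}$ yields $-\sigma^2e^{-3\lambda^2\sigma^2/4}$. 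The last term is elementary: $\int_0^\infty e^{-\rho^2/\sigma^2}\rho\,d\rho=\sigma^2/2$ by the substitution $u=\rho^2/\sigma^2$. Multiplied by $K^2=e^{-\lambda^2\sigma^2}$, it gives $\frac{\sigma^2}{2}e^{-\lambda^2\sigma^2}$.

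Summing the three contributions and multiplying by $2\pi$ gives
\begin{equation*}
\fnorm{\psi_{0,k}}_2^2=\left(N^{(2)}_{0,k}\right)^2\pi\sigma^2\left(e^{-\lambda^2\sigma^2/2}I_0\!\left(\tfrac{\lambda^2\sigma^2}{2}\right)-2e^{-3\lambda^2\sigma^2/4}+e^{-\lambda^2\sigma^2}\right).
\end{equation*}
Setting this equal to $1$ gives \cref{eq:l2-zero}. The only step that needs care is the cross term. The Gaussian width changes from $2\sigma^2$ in the zero-mean proof to $\sigma^2$ here, because the envelope is squared, so $t^2$ must be updated to $1/\sigma^2$; this is what produces the exponent $3/4$.

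It remains to check that the bracket is positive, so that taking the power $-1/2$ is legitimate. This is immediate: the bracket equals $\frac{2}{\sigma^2}\int_0^\infty e^{-\rho^2/\sigma^2}\left(J_0(\lambda\rho)-K\right)^2\rho\,d\rho$. The integrand is nonnegative and not identically zero, because $J_0(\lambda\rho)$ is not constant in $\rho$ (here $\lambda>0$).
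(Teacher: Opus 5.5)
Your proposal is correct and follows the paper's proof essentially line for line. Both expand the square into the same three terms and evaluate them the same way: Weber's second integral for the squared Bessel term, Weber's first integral with $t^2=1/\sigma^2$ for the cross term, and the elementary Gaussian integral for the constant term. Your closing check that the bracket equals $\frac{2}{\sigma^2}\int_0^\infty e^{-\rho^2/\sigma^2}\left(J_0(\lambda\rho)-K\right)^2\rho\,d\rho>0$ is a small but worthwhile addition the paper omits, since it guarantees that the power $-1/2$ in \cref{eq:l2-zero} is well defined.
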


\begin{proof}
Expanding the squared correction gives
\begin{align}
1
&=
2\pi\left(N^{(2)}_{0,k}\right)^2
\int_0^\infty
e^{-\rho^2/\sigma^2}
\left[J_0(\lambda\rho)-K\right]^2
\rho\,d\rho \\
&=
2\pi\left(N^{(2)}_{0,k}\right)^2
\left(I_A+I_B+I_C\right),
\end{align}
where
\begin{align}
I_A
&=
\int_0^\infty
e^{-\rho^2/\sigma^2}
J_0(\lambda\rho)^2\rho\,d\rho,\\
I_B
&=
-2K
\int_0^\infty
e^{-\rho^2/\sigma^2}
J_0(\lambda\rho)\rho\,d\rho,\\
I_C
&=
K^2
\int_0^\infty
e^{-\rho^2/\sigma^2}\rho\,d\rho.
\end{align}

By \cref{eq:weber-second},
\begin{equation}
I_A
=
\frac{\sigma^2}{2}
e^{-\lambda^2\sigma^2/2}
I_0\left(\frac{\lambda^2\sigma^2}{2}\right).
\end{equation}

Using \cref{eq:weber-first} with $t^2=1/\sigma^2$,
\begin{equation}
I_B
=
-K\sigma^2e^{-\lambda^2\sigma^2/4}
=
-\sigma^2e^{-3\sigma^2\lambda^2/4},
\end{equation}
where we used \cref{eq:zero-mean-K}. Finally,
\begin{equation}
I_C
=
\frac{K^2\sigma^2}{2}
=
\frac{\sigma^2}{2}e^{-\sigma^2\lambda^2}.
\end{equation}

Combining the three terms gives \cref{eq:l2-zero}.
\end{proof}

\subsection{Fourier-Domain Peak Normalisation}

The $L^1$-$L^\infty$ bound in \cref{prop:l1-linf} motivates an
$L^1$-based normalisation when frequency-domain amplitude consistency is desired. In the implementation, however, we directly normalise each wavelet by the maximum of its radial Fourier response.

\begin{definition}[Peak normalisation]
\label{def:peak-normalisation}
Let $R_{m,k}(q)$ denote the radial component of the Fourier-domain wavelet. We define
\begin{equation}
N^{(1)}_{m,k}
=
\frac{1}
{\displaystyle\max_{q\geq0}\fabs{R_{m,k}(q)}}.
\label{eq:peak-normalisation}
\end{equation}
\end{definition}

The maximum can be found numerically on the finite frequency grid used in the implementation. We nevertheless derive the corresponding stationary-point equations below.

The modified Bessel function satisfies
\begin{align}
I_m'(x)
&=
I_{m-1}(x)-\frac{m}{x}I_m(x)\\
&=
\frac{m}{x}I_m(x)+I_{m+1}(x),
\end{align}
and hence
\begin{equation}
I_m'(x)
=
\frac12\left[I_{m-1}(x)+I_{m+1}(x)\right].
\label{eq:modified-bessel-derivative}
\end{equation}

\subsubsection{Angular orders $m\geq1$}

For $m\geq1$, set
\begin{equation}
x=\sigma^2\lambda q.
\end{equation}
The radial response is
\begin{equation}
R_{m,k}(q)
=
\sigma^2
e^{-\sigma^2(\lambda^2+q^2)/2}
I_m(x).
\label{eq:radial-response-positive}
\end{equation}

Differentiating and setting the derivative to zero gives
\begin{align}
0
&=
\frac{dR_{m,k}}{dq}\\
&=
\sigma^2
e^{-\sigma^2(\lambda^2+q^2)/2}
\left[
-\sigma^2qI_m(x)
+
\sigma^2\lambda I_m'(x)
\right].
\end{align}

Since the prefactors are non-zero,
\begin{equation}
\lambda I_m'(x)
=
qI_m(x).
\label{eq:peak-condition-positive}
\end{equation}

Using the first derivative identity for $I_m$ gives
\begin{equation}
\lambda I_{m-1}(x)
=
I_m(x)
\left(
q+\frac{m}{\sigma^2q}
\right).
\label{eq:qstar-positive}
\end{equation}

In general, \cref{eq:qstar-positive} does not admit a closed-form
solution for $q$. The peak frequency $q_{m,k}^\ast$ is therefore found
numerically.

\subsubsection{Zeroth angular order $m=0$}

For $m=0$, the correction term gives
\begin{equation}
R_{0,k}(q)
=
\sigma^2
e^{-\sigma^2(\lambda^2+q^2)/2}
\left[I_0(x)-1\right].
\label{eq:radial-response-zero}
\end{equation}

Differentiating gives the stationary-point equation
\begin{equation}
\lambda I_0'(x)
=
q\left[I_0(x)-1\right].
\end{equation}

Since $I_0'(x)=I_1(x)$,
\begin{equation}
\lambda I_1(x)
=
q\left[I_0(x)-1\right].
\label{eq:qstar-zero}
\end{equation}

Again, the peak frequency $q_{0,k}^\ast$ is obtained numerically.

\subsubsection{Asymptotic case}

When $\sigma^2\lambda^2\gg1$, the large-argument asymptotic behaviour of
the modified Bessel function gives
\begin{equation}
I_m'(x)\approx I_m(x).
\end{equation}

Consequently, \cref{eq:peak-condition-positive} gives
\begin{equation}
q_{m,k}^\ast\approx\lambda_{m,k}.
\end{equation}

Evaluating the radial response at this approximate maximum yields
\begin{align}
N^{(1)}_{m,k}
&\approx
\frac{1}
{\sigma^2e^{-\sigma^2\lambda_{m,k}^2}
I_m(\sigma^2\lambda_{m,k}^2)},
\qquad m\geq1,\\
N^{(1)}_{0,k}
&\approx
\frac{1}
{\sigma^2e^{-\sigma^2\lambda_{0,k}^2}
\left[
I_0(\sigma^2\lambda_{0,k}^2)-1
\right]}.
\end{align}

\FloatBarrier
\section{Fourier-Domain Representation}

The previous sections constructed the wavelets in the spatial domain. We now derive their closed-form Fourier representation. This form is useful both for analysing frequency coverage and for implementing the
filters without explicitly computing a numerical Fourier transform.

We use the two-dimensional Fourier-transform convention
\begin{equation}
\widehat{f}(k_x,k_y)=\int_{\R^2}f(x,y)e^{-i(k_xx+k_yy)}\,dx\,dy.
\label{eq:fourier-convention-2d}
\end{equation}

Writing the spatial and frequency coordinates in polar form,
\begin{equation}
x=\rho\cos\varphi,\qquad y=\rho\sin\varphi,
\end{equation}
and
\begin{equation}
k_x=q\cos\phi,\qquad k_y=q\sin\phi,
\end{equation}
we have
\begin{equation}
k_xx+k_yy=q\rho\cos(\varphi-\phi).
\end{equation}

\begin{lemma}[Angular Fourier-Bessel integral]
\label{lem:angular-integral}
For integer $m$,
\begin{equation}
\int_0^{2\pi}e^{im\varphi}e^{-iq\rho\cos(\varphi-\phi)}\,d\varphi=2\pi (-i)^m e^{im\phi}J_m(q\rho).
\label{eq:angular-integral}
\end{equation}
\end{lemma}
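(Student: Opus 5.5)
The plan is to reduce the integral to the standard integral representation of $J_m$, which will be justified from the Jacobi--Anger expansion. First substitute $\theta=\varphi-\phi$. Because the integrand is $2\pi$-periodic in $\varphi$, the range of integration can be shifted back to $[0,2\pi)$. This gives
\begin{equation*}
\int_0^{2\pi}e^{im\varphi}e^{-iq\rho\cos(\varphi-\phi)}\,d\varphi
=e^{im\phi}\int_0^{2\pi}e^{im\theta}e^{-iq\rho\cos\theta}\,d\theta,
\end{equation*}
so the whole $\phi$-dependence is the factor $e^{im\phi}$. It then remains to show that $\int_0^{2\pi}e^{im\theta}e^{-iz\cos\theta}\,d\theta=2\pi(-i)^mJ_m(z)$ with $z=q\rho$.

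Next, convert the cosine to a sine so that the generating function of the Bessel functions can be used. Substitute $\theta=\alpha+\pi/2$, so that $\cos\theta=-\sin\alpha$ and $e^{im\theta}=i^m e^{im\alpha}$; periodicity again keeps the range $[0,2\pi)$. The integral becomes $i^m\int_0^{2\pi}e^{im\alpha}e^{iz\sin\alpha}\,d\alpha$. Now use the Jacobi--Anger expansion
\begin{equation*}
e^{iz\sin\alpha}=\sum_{n\in\mathbb{Z}}J_n(z)e^{in\alpha}.
\end{equation*}
It can be obtained from the generating function $\exp\bigl(\tfrac z2(t-t^{-1})\bigr)=\sum_n J_n(z)t^n$ with $t=e^{i\alpha}$. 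That generating function in turn follows by multiplying the exponential series for $e^{zt/2}$ and $e^{-z/(2t)}$, collecting powers of $t$, and comparing with \cref{eq:bessel-series}; the relation $J_{-m}=(-1)^mJ_m$ handles the negative powers. The Laurent series converges absolutely and uniformly on $|t|=1$, so termwise integration is allowed. Orthogonality $\int_0^{2\pi}e^{i(m+n)\alpha}\,d\alpha=2\pi\delta_{n,-m}$ then selects the single term $n=-m$, and the integral equals $2\pi i^m J_{-m}(z)$.

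Finally, use $J_{-m}(z)=(-1)^mJ_m(z)$ and $i^m(-1)^m=(-i)^m$ to get $2\pi(-i)^mJ_m(q\rho)$. Restoring the factor $e^{im\phi}$ gives \cref{eq:angular-integral}.

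I expect the only real obstacle to be establishing the Jacobi--Anger/generating-function identity from the series definition, namely the coefficient matching and the justification for exchanging sum and integral. Everything else is substitution and bookkeeping. The other point needing care is sign conventions, since a slip would turn $(-i)^m$ into $i^m$. As a check, $m=0$ must reproduce $2\pi J_0(q\rho)$, and for $m=1$ the small-$z$ expansion $\int_0^{2\pi}e^{i\theta}(-iz\cos\theta)\,d\theta=-i\pi z$ must match $2\pi(-i)\,z/2$; both agree.
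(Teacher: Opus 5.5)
Your proof is correct and follows essentially the same route as the paper: shift $\theta=\varphi-\phi$, expand with the Jacobi--Anger identity, let orthogonality select $n=-m$, and finish with $J_{-m}=(-1)^mJ_m$. The only difference is cosmetic. Your extra shift $\theta=\alpha+\pi/2$ passes to the sine form $e^{iz\sin\alpha}=\sum_n J_n(z)e^{in\alpha}$, which is exactly how the cosine form $e^{-iz\cos\theta}=\sum_n(-i)^nJ_n(z)e^{in\theta}$ quoted by the paper is obtained. You also justify termwise integration, which the paper leaves implicit.
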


\begin{proof}
Set $\theta=\varphi-\phi$. Then
\begin{align}
\int_0^{2\pi}e^{im\varphi}e^{-iq\rho\cos(\varphi-\phi)}d\varphi&=e^{im\phi}
\int_0^{2\pi}e^{im\theta}e^{-iq\rho\cos\theta}\,d\theta.
\end{align}

Using the Jacobi-Anger expansion
\begin{equation}
e^{-iz\cos\theta}=\sum_{n=-\infty}^{\infty}
(-i)^nJ_n(z)e^{in\theta},
\end{equation}
we obtain
\begin{align}
& e^{im\phi}\sum_{n=-\infty}^{\infty}
(-i)^nJ_n(q\rho)\int_0^{2\pi}e^{i(m+n)\theta}\,d\theta.
\end{align}

Only the term $n=-m$ survives. Using $J_{-m}(x)=(-1)^m J_m(x)$ and the identity
$(-i)^{-m}(-1)^m=(-i)^m$, the integral evaluates to
\begin{equation}
2\pi (-i)^m e^{im\phi}J_m(q\rho).
\end{equation}
\end{proof}

\begin{theorem}[Fourier representation of the Fourier-Bessel wavelet]
\label{thm:fourier-wavelet}
Under the convention in \cref{eq:fourier-convention-2d}, the Fourier transform of \cref{eq:fb-wavelet} is
\begin{align}
\widehat{\psi}_{m,k}(q,\phi)
&=
(-i)^m e^{im\phi}N_{m,k}
\Bigg[\sigma^2e^{-\frac{\sigma^2}{2}(\lambda_{m,k}^2+q^2)}
I_m(\sigma^2\lambda_{m,k}q)
-K_{m,k}\sigma^2e^{-\sigma^2q^2/2}\Bigg].
\label{eq:fourier-wavelet}
\end{align}
\end{theorem}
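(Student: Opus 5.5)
We will compute the transform directly in polar coordinates. Substituting \cref{eq:fb-wavelet} into \cref{eq:fourier-convention-2d} and using $dx\,dy=\rho\,d\rho\,d\varphi$ together with $k_xx+k_yy=q\rho\cos(\varphi-\phi)$, we obtain
\begin{equation*}
\widehat{\psi}_{m,k}(q,\phi)=N_{m,k}\int_0^\infty e^{-\rho^2/(2\sigma^2)}\left[J_m(\lambda\rho)-K_{m,k}\right]\left(\int_0^{2\pi}e^{im\varphi}e^{-iq\rho\cos(\varphi-\phi)}\,d\varphi\right)\rho\,d\rho .
\end{equation*}
Exchanging the order of integration is justified by Fubini's theorem, since the Gaussian envelope makes the integrand absolutely integrable. \cref{lem:angular-integral} then evaluates the inner integral as $2\pi(-i)^m e^{im\phi}J_m(q\rho)$. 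This pulls the factor $(-i)^m e^{im\phi}$ out of the integral and leaves two radial Hankel-type integrals, one from $J_m(\lambda\rho)$ and one from the constant $K_{m,k}$.

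For the first integral, $2\pi\int_0^\infty e^{-\rho^2/(2\sigma^2)}J_m(\lambda\rho)J_m(q\rho)\rho\,d\rho$, we apply Weber's second exponential integral \cref{eq:weber-second} with $t^2=1/(2\sigma^2)$, $\lambda_1=\lambda$ and $\lambda_2=q$. This gives $2\pi\sigma^2 e^{-\sigma^2(\lambda^2+q^2)/2}I_m(\sigma^2\lambda q)$.

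For the second integral, $-2\pi K_{m,k}\int_0^\infty e^{-\rho^2/(2\sigma^2)}J_m(q\rho)\rho\,d\rho$, we split into cases. When $m\geq1$ we have $K_{m,k}=0$, so this term vanishes and we do not need to evaluate the integral for general $m$. When $m=0$, Weber's first integral \cref{eq:weber-first}, with $t^2=1/(2\sigma^2)$ and $\lambda$ replaced by $q$, gives $\sigma^2e^{-\sigma^2q^2/2}$. In both cases the result has the stated form $-K_{m,k}\sigma^2 e^{-\sigma^2 q^2/2}$ up to the common prefactor.

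The main obstacle is bookkeeping of the overall constant. Each radial integral carries a factor $2\pi$ from the angular lemma, and the displayed statement \cref{eq:fourier-wavelet} omits it. We will track this factor explicitly and reconcile it in one of two ways: either it is absorbed into $N_{m,k}$, which is application-specific and later fixed by peak normalisation through the radial response $R_{m,k}$ in \cref{eq:radial-response-positive}, or the statement is read up to this constant. As a consistency check, for $m=0$ at $q=0$ we have $I_0(0)=1$ and $K_{0,k}=e^{-\lambda^2\sigma^2/2}$, so the bracket vanishes, in agreement with \cref{prop:zero-mean}.
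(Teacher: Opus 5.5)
Your computation follows the paper's proof step for step: polar coordinates, \cref{lem:angular-integral} for the angular integral, \cref{eq:weber-second} with $t^2=1/(2\sigma^2)$ for the $J_m(\lambda\rho)$ term, and \cref{eq:weber-first} for the $m=0$ correction. You also use $K_{m,k}=0$ to dispose of the correction when $m\geq1$, a point the paper leaves implicit, as it does the Fubini justification.

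The factor you flag is genuine. The paper's proof drops the $2\pi$ produced by \cref{lem:angular-integral} in the very next displayed line. So under \cref{eq:fourier-convention-2d}, the transform of \cref{eq:fb-wavelet} is $2\pi$ times the right-hand side of \cref{eq:fourier-wavelet}. The paper itself corroborates this. Its low-pass pair \cref{eq:lowpass-spatial} and \cref{eq:lowpass-fourier} says that $e^{-\rho^2/(2\sigma^2)}$ has transform $2\pi\sigma^2e^{-\sigma^2q^2/2}$. The theorem, by contrast, assigns $K_{0,k}e^{-\rho^2/(2\sigma^2)}$ the transform $K_{0,k}\sigma^2e^{-\sigma^2q^2/2}$.

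Your first proposed reconciliation does not work, however. $N_{m,k}$ is the same constant in \cref{eq:fb-wavelet} and \cref{eq:fourier-wavelet}, so no choice of it absorbs the discrepancy. Moreover, $R_{m,k}$ in \cref{eq:radial-response-positive} also omits the $2\pi$, so taking $N_{m,k}=N^{(1)}_{m,k}$ in the spatial wavelet gives a Fourier peak of $2\pi$ rather than $1$.

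You should therefore present your result as the corrected identity, with an extra factor $2\pi$ on the right-hand side. Do not present it as a proof of \cref{eq:fourier-wavelet} as written.
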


\begin{proof}
Substituting the polar coordinates into the Fourier transform and using the Jacobian $\rho$ gives
\begin{align}
\widehat{\psi}_{m,k}(q,\phi)
&=\int_0^\infty\int_0^{2\pi}N_{m,k}e^{-\rho^2/(2\sigma^2)}\left[
J_m(\lambda_{m,k}\rho)-K_{m,k}
\right]e^{im\varphi}
\cdot e^{-iq\rho\cos(\varphi-\phi)}\rho\,d\varphi\,d\rho.
\end{align}

Applying \cref{lem:angular-integral} gives
\begin{align}
\widehat{\psi}_{m,k}(q,\phi)
&=(-i)^m e^{im\phi}N_{m,k}
\int_0^\infty e^{-\rho^2/(2\sigma^2)}
\left[J_m(\lambda_{m,k}\rho)-K_{m,k}\right]
J_m(q\rho)\rho\,d\rho.
\end{align}

The first radial term is Weber's second exponential integral,
\begin{align}
&\int_0^\infty
e^{-\rho^2/(2\sigma^2)}
J_m(\lambda_{m,k}\rho)
J_m(q\rho)
\rho\,d\rho
\nonumber\\
&\qquad=
\sigma^2
e^{-\frac{\sigma^2}{2}
(\lambda_{m,k}^2+q^2)}
I_m(\sigma^2\lambda_{m,k}q).
\end{align}

For the correction term, the remaining radial integral is
\begin{equation}
\int_0^\infty e^{-\rho^2/(2\sigma^2)}J_m(q\rho)\rho\,d\rho.
\end{equation}

For the $m=0$ correction used in the present construction, Weber's first
exponential integral gives
\begin{equation}
\int_0^\infty e^{-\rho^2/(2\sigma^2)}J_0(q\rho)\rho\,d\rho
=\sigma^2e^{-\sigma^2q^2/2}.
\end{equation}

Substituting the radial integrals gives \cref{eq:fourier-wavelet}.
\end{proof}

\begin{remark}[Separation of angular and radial structure]
\label{rem:fourier-separation}
The Fourier representation separates naturally into the angular factor
\begin{equation}
(-i)^m e^{im\phi}
\end{equation}
and a radial response depending only on $q$. Thus $m$ controls angular selectivity, while $\lambda_{m,k}$ controls the radial frequency location.
\end{remark}

\begin{figure}[h]
    \centering
    \includegraphics[width=0.7\linewidth]{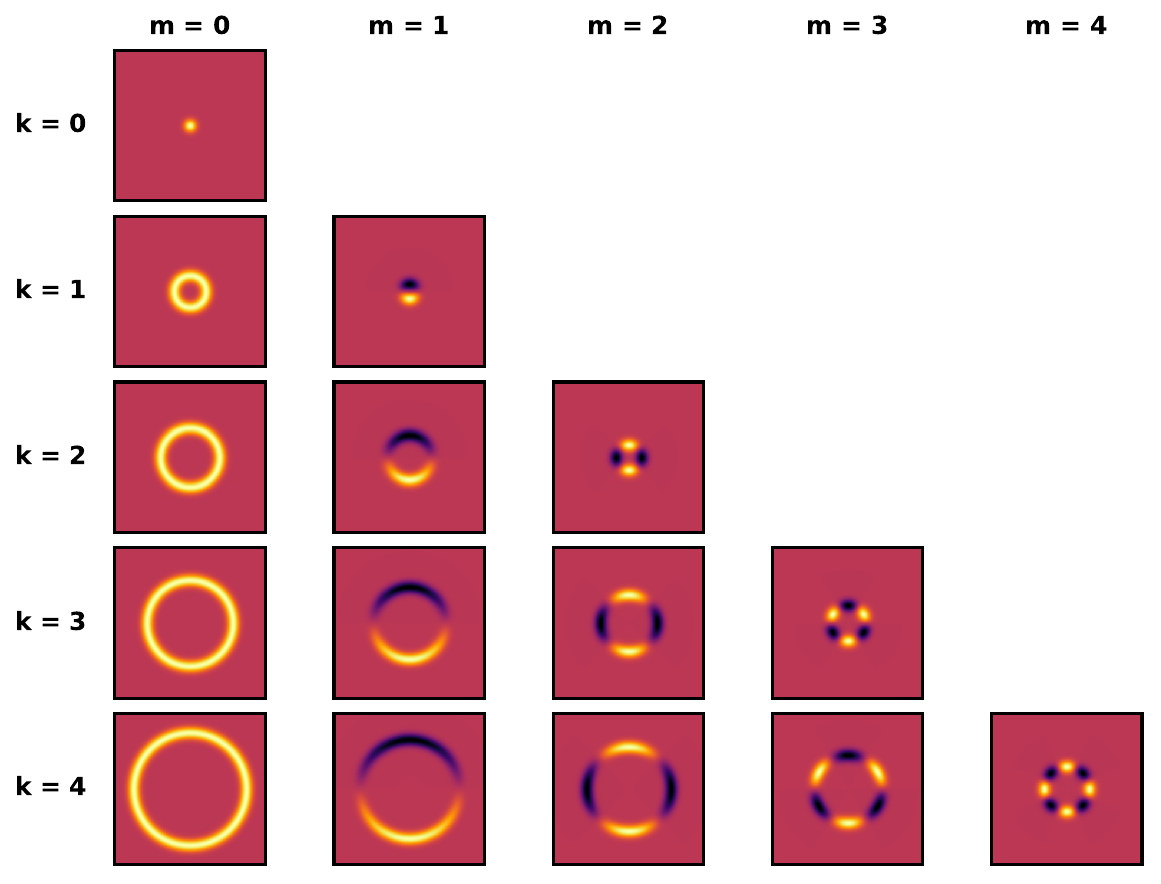}
    \caption{Equivalent Fourier-domain filter bank.}
    \label{fig:fourierbank}
\end{figure}

\begin{figure}[h]
    \centering
    \includegraphics[width=1\linewidth]{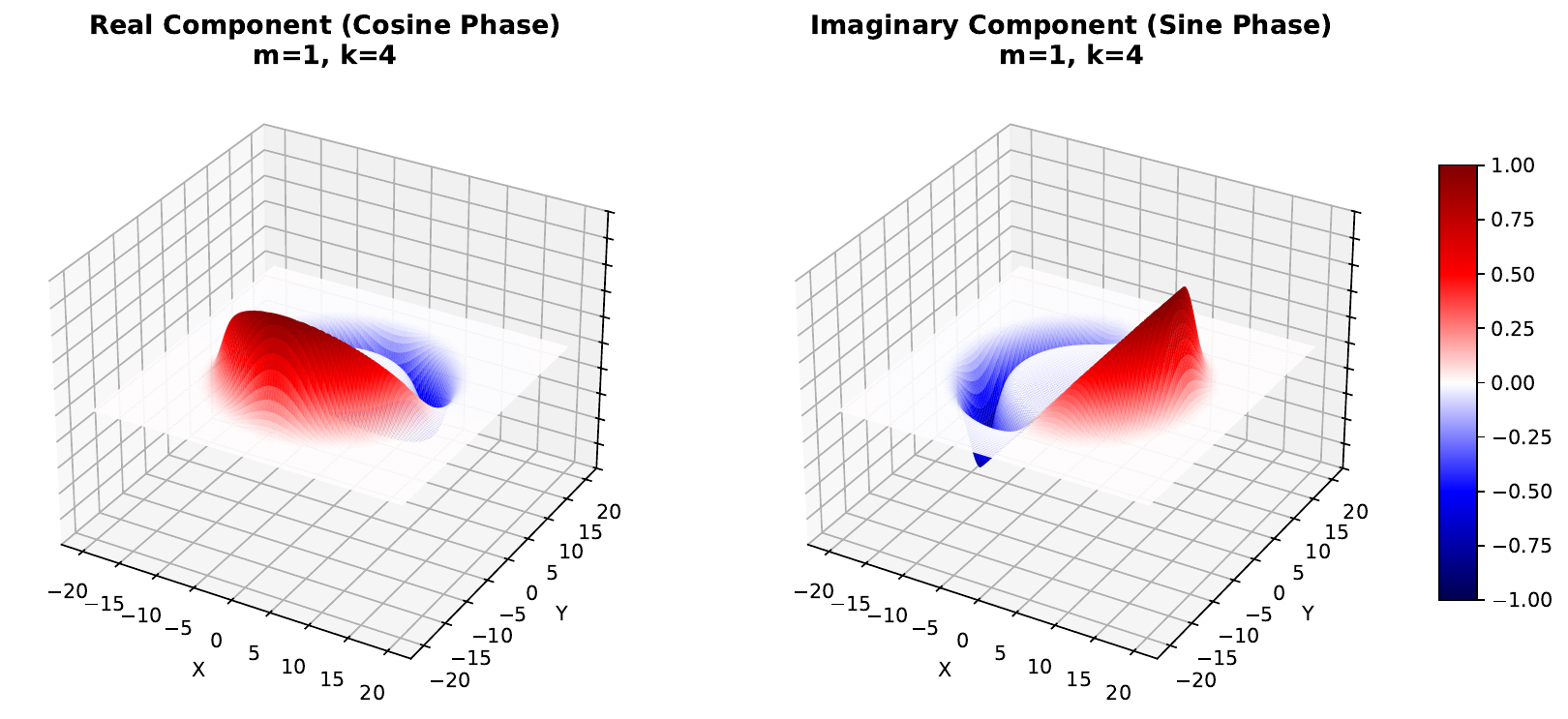}
    \caption{3D rendering of the real and imaginary parts of the
    Fourier-domain wavelet with $m=1$ and $k=4$.}
    \label{fig:3d_spatial_fourier}
\end{figure}

\FloatBarrier
\section{Frequency Tiling and Eigenvalue Spacing}

The use of the Neumann eigenvalues as the radial frequency parameter is
motivated by their approximately uniform spacing. From
\cref{eq:mcmahon},
\begin{equation}
\lambda_{m,k}\approx\beta_{m,k}-\frac{4m^2+3}{8\beta_{m,k}},
\end{equation}
with
\begin{equation}
\beta_{m,k}=\left(k+\frac{m}{2}-\frac{3}{4}\right)\pi.
\end{equation}

Therefore, the leading-order spacing is
\begin{equation}
\lambda_{m,k+1}-\lambda_{m,k}\longrightarrow\pi
\qquad\text{as }k\to\infty.
\label{eq:eigenvalue-spacing}
\end{equation}

\cref{tab:eigen_spacing} illustrates this convergence for the first few angular orders and root indices.
For higher angular orders, convergence to the asymptotic spacing is slower. This is consistent with the $m$-dependent correction term in \cref{eq:mcmahon}, whose numerator grows quadratically with $m$.

\begin{definition}[Frame] 
A filter bank consisting of a low-pass filter $\phi$ and a family of wavelet filters $\psi$ forms a frame for the signal space if there exist constants $0 < A \leq B < \infty$ such that:

\begin{equation}
    A \leq |\widehat{\phi}(\omega)|^2 + \sum_{i=1}^{\infty} |\widehat{\psi}_i(\omega)|^2 \leq B
\end{equation}

If $A=B$, the filter bank constitutes a tight frame, ensuring energy conservation (\cref{thm:plancherel}) and numerically stable, perfect reconstruction. In practice we aim to minimise the ratio with $A \approx B$.
\end{definition}

\begin{table}[h]
\centering
\begin{tabular}{@{} c c c c c c r r @{}}
\toprule
$m$ & $k$ & $k+1$ & $\lambda_k$ & $\lambda_{k+1}$ & Spacing
& \multicolumn{1}{c}{Error} & \multicolumn{1}{c}{Rel. Error} \\
&&&&&& \multicolumn{1}{c}{$(\Delta\lambda-\pi)$}
& \multicolumn{1}{c}{(\%)} \\
\midrule
0 & 1 & 2 & 3.8317 & 7.0156 & 3.1839 & +0.0423 & +1.35 \\
0 & 2 & 3 & 7.0156 & 10.1735 & 3.1579 & +0.0163 & +0.52 \\
0 & 3 & 4 & 10.1735 & 13.3237 & 3.1502 & +0.0086 & +0.27 \\
0 & 4 & 5 & 13.3237 & 16.4706 & 3.1469 & +0.0053 & +0.17 \\
0 & 5 & 6 & 16.4706 & 19.6159 & 3.1452 & +0.0036 & +0.12 \\
\addlinespace
1 & 1 & 2 & 1.8412 & 5.3314 & 3.4903 & +0.3487 & +11.10 \\
1 & 2 & 3 & 5.3314 & 8.5363 & 3.2049 & +0.0633 & +2.01 \\
1 & 3 & 4 & 8.5363 & 11.7060 & 3.1697 & +0.0281 & +0.89 \\
1 & 4 & 5 & 11.7060 & 14.8636 & 3.1576 & +0.0160 & +0.51 \\
1 & 5 & 6 & 14.8636 & 18.0155 & 3.1519 & +0.0103 & +0.33 \\
\addlinespace
2 & 1 & 2 & 3.0542 & 6.7061 & 3.6519 & +0.5103 & +16.24 \\
2 & 2 & 3 & 6.7061 & 9.9695 & 3.2633 & +0.1217 & +3.88 \\
2 & 3 & 4 & 9.9695 & 13.1704 & 3.2009 & +0.0593 & +1.89 \\
2 & 4 & 5 & 13.1704 & 16.3475 & 3.1772 & +0.0356 & +1.13 \\
2 & 5 & 6 & 16.3475 & 19.5129 & 3.1654 & +0.0238 & +0.76 \\
\addlinespace
3 & 1 & 2 & 4.2012 & 8.0152 & 3.8140 & +0.6725 & +21.40 \\
3 & 2 & 3 & 8.0152 & 11.3459 & 3.3307 & +0.1891 & +6.02 \\
3 & 3 & 4 & 11.3459 & 14.5858 & 3.2399 & +0.0983 & +3.13 \\
3 & 4 & 5 & 14.5858 & 17.7887 & 3.2029 & +0.0613 & +1.95 \\
3 & 5 & 6 & 17.7887 & 20.9725 & 3.1837 & +0.0421 & +1.34 \\
\addlinespace
4 & 1 & 2 & 5.3176 & 9.2824 & 3.9648 & +0.8233 & +26.20 \\
4 & 2 & 3 & 9.2824 & 12.6819 & 3.3995 & +0.2579 & +8.21 \\
4 & 3 & 4 & 12.6819 & 15.9641 & 3.2822 & +0.1406 & +4.48 \\
4 & 4 & 5 & 15.9641 & 19.1960 & 3.2319 & +0.0903 & +2.88 \\
4 & 5 & 6 & 19.1960 & 22.4010 & 3.2050 & +0.0634 & +2.02 \\
\bottomrule
\end{tabular}
\caption{Convergence of consecutive Neumann eigenvalue spacings toward
$\pi$ for angular orders $m=0,\ldots,4$.}
\label{tab:eigen_spacing}
\end{table}

\newpage
In \cref{fig:main_figure_combined}, we explore the frame bounds ratio $B/A$ to evaluate the wavelets. This experiment is not intended as a proof of frame bounds. It is included only to illustrate why proposed frequency organisation may merit further investigation. Note that the lower bound is evaluated at $0.75\pi$ to avoid the dividing by 0 when discrete wavelets naturally decay at the edge. Both wavelet families were evaluated on the same resolution, image size, variance and peak normalisation. Furthermore, while Solid Harmonics produce $J\times L$ wavelets, Fourier-Bessel wavelets, due to the pyramidal constraint, create $\sum_{m=0}^MK-m$ filters.

Across the tested parameter range, the Fourier-Bessel banks exhibit lower coverage ripple than the corresponding Solid Harmonic banks. This behaviour is consistent with the near linear spacing of the radial frequencies, which distributes the filters more uniformly. By contrast, the dyadic organisation of the Solid Harmonic filters places greater emphasis on lower frequencies and progressively wider spacing at higher frequencies. This difference should not be interpreted as evidence that linear spacing is preferable: the frequency weighting induced by dyadic scaling is an important feature of conventional wavelet constructions and can be desirable for tasks such as image classification, where greater emphasis on lower frequencies may contribute to robustness to small perturbations. Rather, the experiment suggests that linear frequency organisation is an interesting alternative when more uniform frequency representation is desired.

\begin{figure}[h]
    \centering
    \begin{subfigure}[t]{0.48\textwidth}
        \centering
        \includegraphics[width=\textwidth]{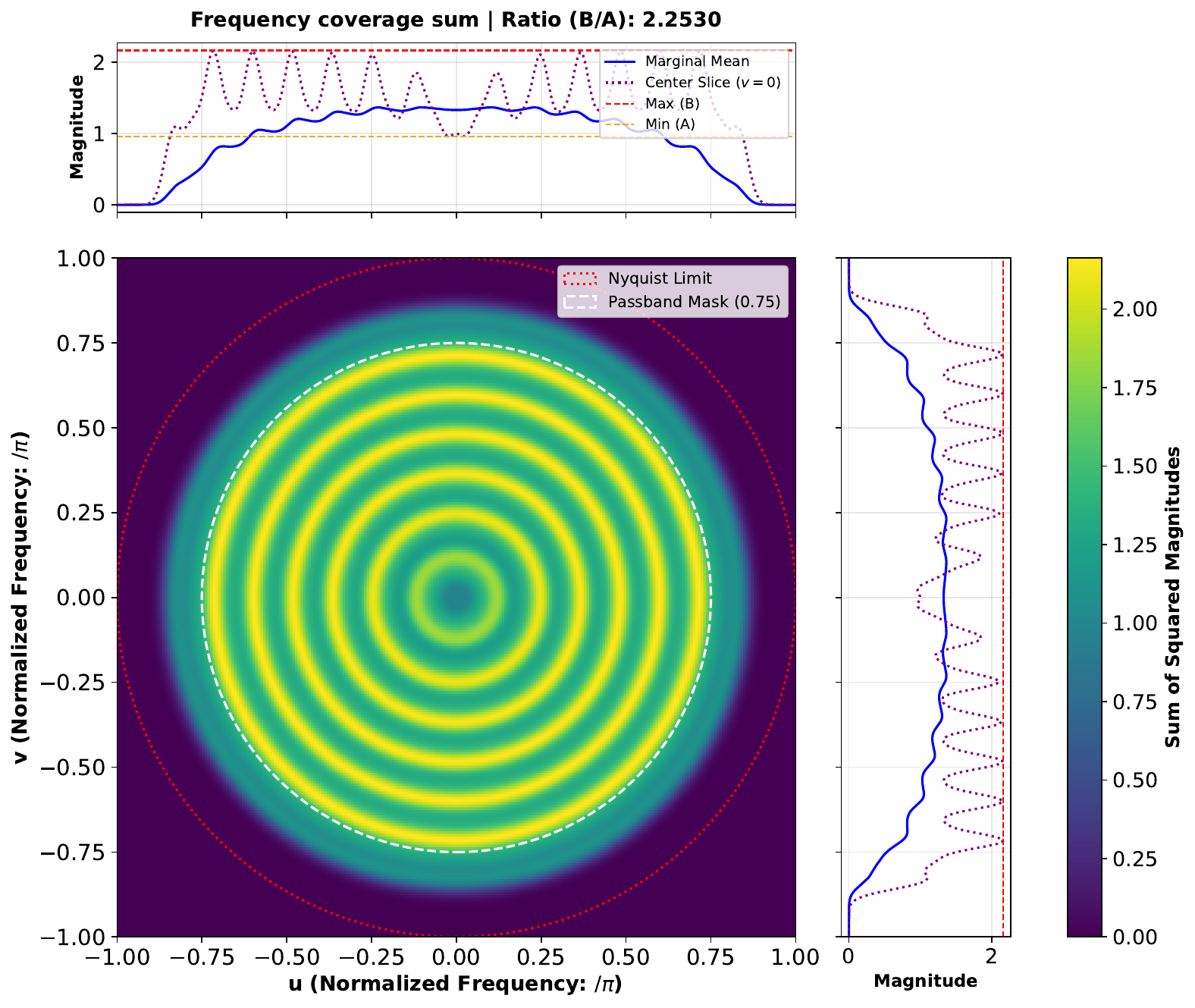}
        \caption{Frequency coverage sum for the Fourier-Bessel bank $m=3, k=8, \sigma=1$. The linear spacing is clearly visible with little overlap, achieving a ratio of $2.2530$.}
        \label{fig:paley_bessel}
    \end{subfigure}
    \hfill
    \begin{subfigure}[t]{0.48\textwidth}
        \centering
        \includegraphics[width=\textwidth]{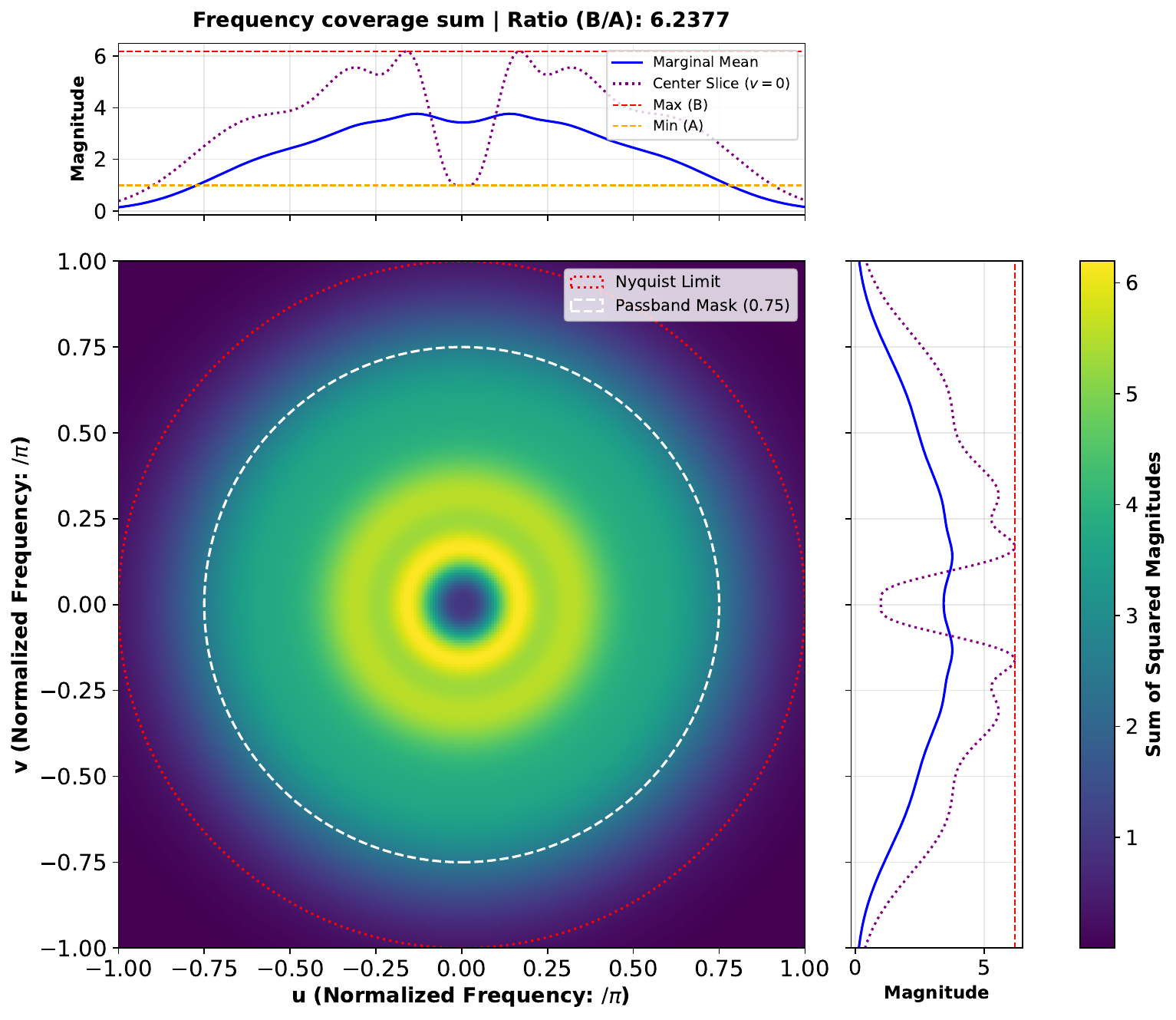}
        \caption{Frequency coverage sum for the Solid Harmonic bank $J=3, L=5, \sigma=1$. Here, dyadic scaling causes wider coverage gaps and overlaps (ratio of $6.2377$ in this example).}
        \label{fig:sh_bessel}
    \end{subfigure}

    \vspace{1em}

    \begin{subfigure}[t]{0.48\textwidth}
        \centering
        \includegraphics[width=\textwidth]{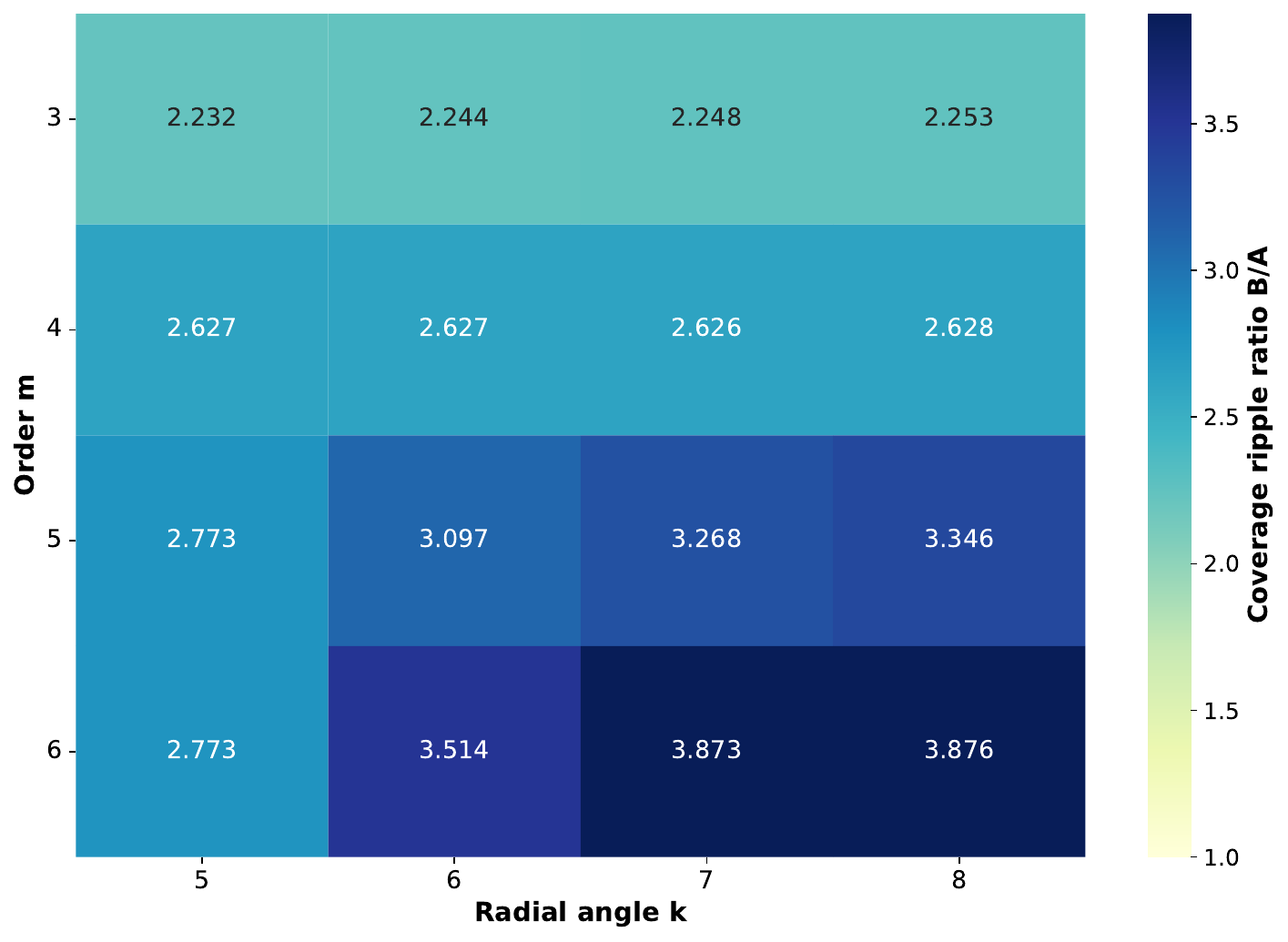}
        \caption{Fourier-Bessel frequency coverage parameter search. Ratio values remain consistent, naturally beginning to increase for larger orders $m$ as the bank begins to cover frequencies beyond the Nyquist limit, raising the lower bound at the cut off.}
        \label{fig:grid_bessel}
    \end{subfigure}
    \hfill
    \begin{subfigure}[t]{0.48\textwidth}
        \centering
        \includegraphics[width=\textwidth]{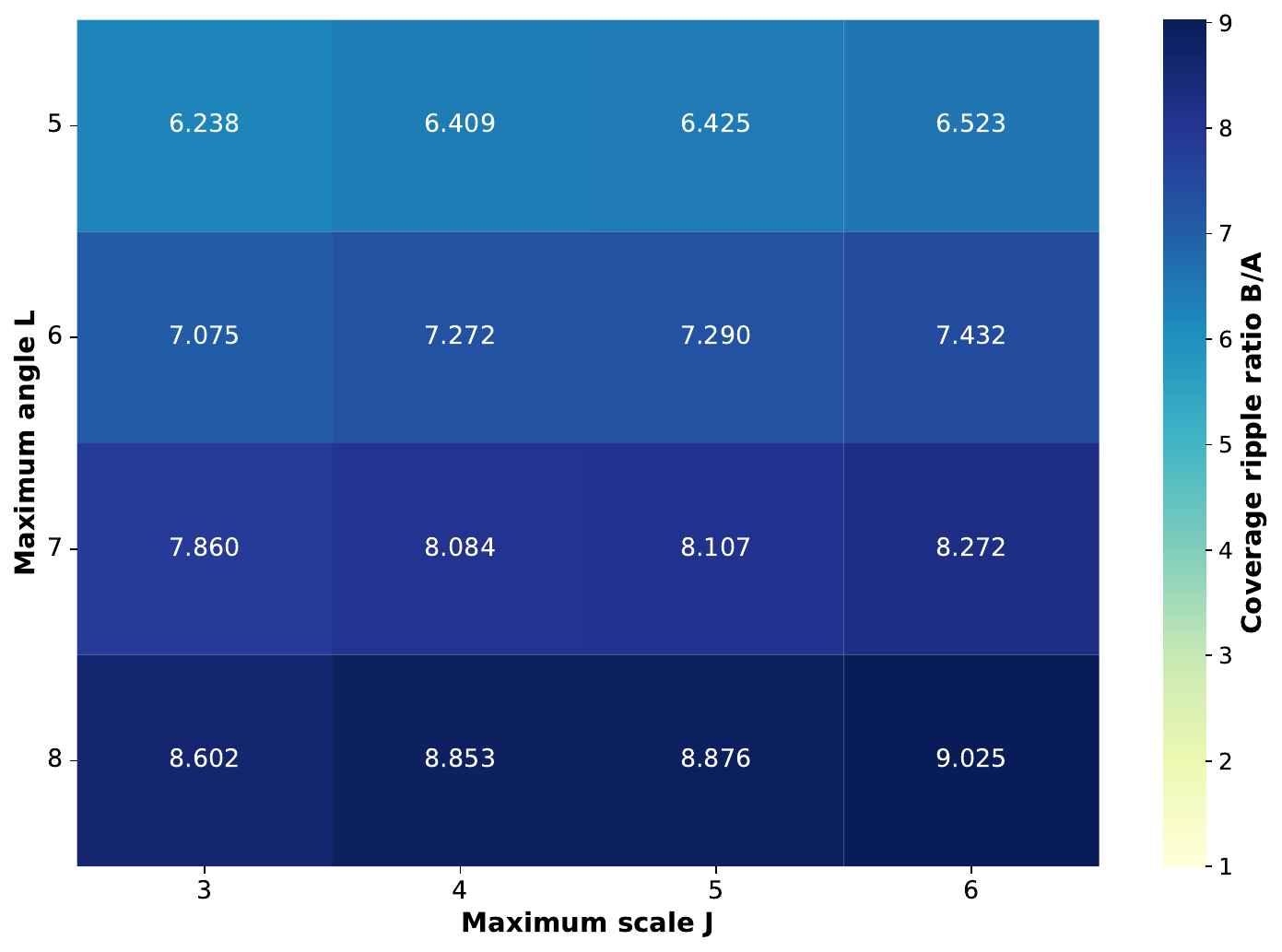}
        \caption{Solid Harmonic frequency coverage parameter search. Values follow a similar pattern, increasing as the bank exceeds the boundary, but the dyadic scaling causes a more rapid increase of the ratio.}
        \label{fig:grid_sh}
    \end{subfigure}

    \caption{Frequency coverage comparison between Fourier-Bessel wavelets and Solid Harmonics. Top row: Frequency coverage sums for a single representative bank of each family, showing the linear-spacing tiling of the Fourier-Bessel bank (a) against the dyadic-scaling gaps and overlaps of the Solid Harmonic bank (b). Bottom row: grid search over bank parameters, showing the frame ripple ratio $B/A$ for the Fourier-Bessel bank (c) and the Solid Harmonic bank (d); the Fourier-Bessel ratios stay consistently lower and grow more slowly than the Solid Harmonic ratios across the tested parameter range.}
    \label{fig:main_figure_combined}
\end{figure}

\FloatBarrier
\section{Conclusion}

These notes have developed the mathematical foundations and construction of Fourier-Bessel wavelets. Starting from the Bessel differential equation, we derived the Fourier-Bessel disk harmonics as solutions of the Helmholtz equation subject to a Neumann boundary condition. The resulting eigenvalues provide a natural radial frequency parameter whose asymptotic spacing approaches $\pi$.

We then constructed a wavelet family by applying a Gaussian spatial envelope to the Fourier-Bessel basis and introducing a zero-mean correction for the zeroth angular order. The corresponding $L^2$ normalisation constants were derived using Weber's exponential integrals, while a peak normalisation based on the radial Fourier response was developed for applications requiring consistent frequency-domain amplitudes.

Finally, we derived a closed-form Fourier-domain representation of the wavelets. This representation separates naturally into angular and radial components and provides a direct description of the frequency
response of each wavelet.

The preliminary frequency coverage provides evidence consistent with the motivation for the construction: in the configurations tested, the Fourier-Bessel bank exhibits a flatter frequency coverage profile relative to the corresponding Solid Harmonic bank. This experiment is intentionally limited and are not intended to establish general performance improvements.

The approximately linear radial frequency spacing should therefore be viewed as a complementary alternative to dyadic scaling rather than a replacement for it. Dyadic scaling remains central to wavelet theory and provides important theoretical and practical properties that have not been established for the present construction. The motivation for the Fourier-Bessel approach is instead to explore a different allocation of frequency resolution, which may be advantageous in reconstruction oriented settings where approximately uniform frequency representation is desirable. Determining the classes of tasks for which either frequency organisation is preferable is an open question.

The purpose of these notes is primarily theoretical and pedagogical. They are intended to provide a detailed mathematical reference for the construction rather than to constitute a comprehensive empirical evaluation of the resulting wavelet family.

The accompanying \texttt{fbscatnet} library implements the construction described throughout these notes and reproduces the figures presented here. A natural next step is to evaluate the resulting wavelets empirically within scattering networks and to compare their performance with existing wavelet constructions across relevant downstream
applications.

\newpage
\FloatBarrier
\printbibliography

\end{document}